\def\eqref#1{equation~\ref{#1}}
\def\1{\bm{1}}
\def\rvc{{\bm{c}}}
\def\rvx{{\bm{x}}}
\def\rvz{{\bm{z}}}
\DeclareMathAlphabet{\mathsfit}{\encodingdefault}{\sfdefault}{m}{sl}
\SetMathAlphabet{\mathsfit}{bold}{\encodingdefault}{\sfdefault}{bx}{n}
\algrenewcommand\algorithmicrequire{\textbf{Input:}}
\algrenewcommand\algorithmicensure{\textbf{Output:}}
\title{Know When to Abstain: Optimal Selective
Classification with Likelihood Ratios}
\author{Alvin Heng\textsuperscript{1} \& Harold Soh\textsuperscript{1,2}\\
\textsuperscript{1}Department of Computer Science, National University of Singapore\\
\textsuperscript{2}Smart Systems Institute, National University of Singapore\\
\texttt{\{alvinh, harold\}@comp.nus.edu.sg} 
}
\begin{document}

\maketitle

\begin{abstract}
Selective classification enhances the reliability of predictive models by allowing them to abstain from making uncertain predictions.  In this work, we revisit the design of optimal selection functions through the lens of the Neyman--Pearson lemma, a classical result in statistics that characterizes the optimal rejection rule as a likelihood ratio test.  We show that this perspective not only unifies the behavior of several post-hoc selection baselines, but also motivates new approaches to selective classification which we propose here. 
A central focus of our work is the setting of covariate shift, where the input distribution at test time differs from that at training. This realistic and challenging scenario remains relatively underexplored in the context of selective classification. We evaluate our proposed methods across a range of vision and language tasks, including both supervised learning and vision-language models. Our experiments demonstrate that our Neyman--Pearson-informed methods consistently outperform existing baselines, indicating that likelihood ratio-based selection offers a robust mechanism for improving selective classification under covariate shifts. Our code is publicly available at \textcolor{blue}{\url{https://github.com/clear-nus/sc-likelihood-ratios}}.
\end{abstract}

\section{Introduction}
Machine learning models are inherently fallible and can make erroneous predictions. Unlike humans, who can abstain from answering when uncertain, e.g., by saying ``I don’t know'', predictive models typically produce a prediction for every input regardless of confidence. Selective classification aims to address this limitation by enabling models to abstain on uncertain inputs, thereby improving overall performance and robustness, for instance by deferring ambiguous cases to human experts.

A wide range of methods have been proposed to determine whether a model should accept or reject an input. Common post-hoc approaches rely on heuristic confidence estimates, such as the maximum softmax probability~\citep{geifman2017selective, hendrycks2017a}, logit margins~\citep{liang2024selective}, or Monte Carlo dropout~\citep{geifman2017selective, gal2016dropout}. Other techniques assess a sample’s proximity to the training distribution~\citep{lee2018simple, sun2022out}, under the assumption that samples farther from the data manifold are more likely to be misclassified. A separate line of work trains models with explicit abstention mechanisms, such as a rejection logit or head~\citep{geifman2019selectivenet, liu2019deep, huang2020self}. In this work we focus on post-hoc methods, which are model-agnostic and do not require specialized training.

Despite the rich literature, two important gaps remain. First, while foundational results (such as \citet{chow1970optimum, geifman2017selective}) provide theoretical underpinnings for selective classification, there is a lack of general, principled guidance for designing effective selector functions in the context of modern deep networks. Second, most evaluations are conducted in the i.i.d. setting, where test data is assumed to follow the training distribution. Few works have begun exploring selective classification under distribution shifts~\citep{xia2022augmenting, narasimhan2024plugin}, but focus on the common semantic shifts, neglecting the covariate shift setting that is becoming increasingly relevant.

To address these challenges, we propose a new perspective rooted in the Neyman--Pearson lemma, a classical result from statistics that defines the optimal hypothesis test in terms of a likelihood ratio. We show that existing selectors can be interpreted as approximations to this test, and we use this insight to derive two new selectors, $\Delta$-MDS and $\Delta$-KNN, as well as a simple linear combination strategy. We evaluate our methods on a comprehensive suite of vision and language benchmarks under covariate shift, where the input distribution changes while the label space remains fixed. We focus on covariate shift for two key reasons: first, it is underexplored relative to semantic shifts~\citep{heng2025detecting} (which is well studied in the context of out-of-distribution detection~\citep{hendrycks2017a, ming2022delving, lee2018simple, heng2024out}); second, it is increasingly relevant in modern applications such as vision-language models (VLMs), where the label set is large and variable, rendering most practical shifts in deployment covariate in nature. Our results demonstrate that the proposed selectors outperform existing baselines and provide robust performance across distribution shifts, including on powerful VLMs like CLIP.

In summary, our key contributions are:
\begin{enumerate}
    \item We introduce for the first time a Neyman--Pearson-based framework for defining optimality in selective classification via likelihood ratio tests.
    \item We unify several existing selector methods and propose two new selectors and a linear combination approach under this framework.
    \item We conduct a thorough evaluation under distribution shifts, both covariate and semantic, across vision and language tasks and demonstrate superior performance across both VLMs and traditional supervised models.
\end{enumerate}
\vspace{-0.3cm}
\section{Background}

\paragraph{Selective Classification. } Consider a standard classification problem with input space $\mathcal{X} \subseteq \mathbb{R}^d$, label space $\mathcal{Y} = \{1, ..., K\}$, and data distribution $\mathcal{D}_{\mathcal{X},\mathcal{Y}}$ over $\mathcal{X} \times \mathcal{Y}$. A \textit{selective classifier} is a pair $(f, g)$, where $f: \mathcal{X} \rightarrow \mathbb{R}^K$ is a base classifier, and $g: \mathcal{X} \rightarrow \{0,1\}$ is a \textit{selector function} that determines whether to make a prediction or abstain. Formally,
\begin{equation}
    (f, g)(\rvx) \triangleq
    \begin{cases}
        f(\rvx) & \text{if } g(\rvx) = 1, \\
        \text{abstain} & \text{if } g(\rvx) = 0.
    \end{cases}
\end{equation}
That is, the model abstains on input $\rvx$ when $g(\rvx) = 0$. In practice, $g$ is typically implemented by thresholding a real-valued confidence score:
\begin{equation}\label{eq:threshold}
    g_{s,\gamma}(\rvx) = \mathbbm{1}[s(\rvx) > \gamma],
\end{equation}
where $s: \mathcal{X} \rightarrow \mathbb{R}$ is a confidence scoring function (often adapted from OOD detection methods; see below), and $\gamma$ is a tunable threshold. The performance of a selective classifier is typically evaluated using two metrics:
\begin{align}
    \text{Coverage:} \quad &\phi_{s,\gamma} = \mathbb{E}_{\rvx \sim \mathcal{D}_{\mathcal{X}, \mathcal{Y}}}[g_{s,\gamma}(\rvx)], \\
    \text{Selective Risk:} \quad &R_{s,\gamma} = \frac{\mathbb{E}_{(\rvx, y) \sim \mathcal{D}_{\mathcal{X}, \mathcal{Y}}}[\ell(f(\rvx), y) \cdot g_{s,\gamma}(\rvx)]}{\phi_{s,\gamma}},
\end{align}
where $\ell(f(\rvx), y)$ is the $0/1$ loss~\citep{geifman2017selective}. Selective classification aims to optimize the tradeoff between selective risk and coverage by ideally reducing risk while maintaining high coverage. Improvements can stem from enhancing the base classifier $f$, or from refining the selector $g$ to better identify error-prone inputs. In this work, we fix $f$ to be a strong pretrained model and focus on designing more effective selector functions $g$.

\paragraph{Covariate Shift. } 
Covariate shift refers to a scenario where the marginal distribution over inputs, $p(\rvx)$, changes between training and testing, while the support of the label distribution $p(y)$ remains unchanged. For example, a model trained on photographs of cats may face a covariate shift when evaluated on paintings of cats, as the input appearance changes but the semantic categories are preserved. This is in contrast to \textit{semantic shift}, where both $p(\rvx)$ and $p(y)$ change, typically due to the introduction of unseen classes. In this work, we focus on covariate shifts, which are increasingly relevant in modern applications such as VLMs, where the label set is large and can be adjusted to suit a given task. In such settings, distributional changes primarily manifest as covariate shifts, making them a critical yet underexplored challenge for robust selective classification.

\paragraph{Out-of-Distribution (OOD) Detection. }
OOD detection is closely related to selective classification, as many selector functions $s(\rvx)$ are derived from or inspired by OOD scoring methods. Given an in-distribution (ID) data distribution $p_{ID}$, the goal of OOD detection is to construct a scoring function $s: \mathcal{X} \rightarrow \mathbb{R}$ such that $s(\rvx)$ indicates the likelihood that $\rvx$ originates from $p_{ID}$. A higher $s(\rvx)$ corresponds to higher confidence that $\rvx$ is in-distribution. In selective classification, these scores are thresholded to determine whether to accept or abstain on a given input, as formalized in Eq.~\ref{eq:threshold}.
\section{Selective Classification via the Neyman--Pearson Lemma}
\label{sec:method}
We begin by framing selective classification within the paradigm of hypothesis testing. Let $\mathcal{H}_0: \mathcal{C}$ denote the hypothesis that the classifier makes a correct prediction, and $\mathcal{H}_1: \neg \mathcal{C}$ that it makes an incorrect one. Selective classification then reduces to deciding, for each input, whether to accept $\mathcal{H}_0$ or reject in favor of $\mathcal{H}_1$, i.e., a binary decision problem. This perspective is a natural fit for a foundational result from statistics: the \emph{Neyman--Pearson (NP) lemma}~\citep{neyman1933ix, lehmann1986testing}, which characterizes the optimal decision rule between two competing hypotheses.
\begin{restatable}[name=Neyman--Pearson~\citep{neyman1933ix, lehmann1986testing}]{lemma}{neymanpearsonlemma}\label{lemma:np}
Let $Z \in \mathbb{R}^d$ be a random variable, and consider the hypotheses::
\[
\mathcal{H}_0: Z \sim P_0 \quad \text{vs.} \quad \mathcal{H}_1: Z \sim P_1,
\]
where $P_0$ and $P_1$ have densities $p_0$ and $p_1$ that are strictly positive on a shared support $\mathcal{Z} \subset \mathbb{R}^d$. For any measurable acceptance region $A \subset \mathcal{Z}$ under $\mathcal{H}_0$, define the type I error (false rejection) as $\alpha(A) = P_0(Z \notin A)$, and type II error (false acceptance) as $\beta(A) = P_1(Z \in A)$.

Fix a type I error tolerance $\alpha_0 \in [0,1]$. Let $\gamma(\alpha_0)$ be the threshold such that
\[
A^*(\alpha_0) := \left\{ z \in \mathcal{Z} : \frac{p_0(z)}{p_1(z)} \ge \gamma(\alpha_0) \right\}
\]
satisfies $\alpha(A^*) = \alpha_0$. Then $A^*(\alpha_0)$ minimizes the type II error:
\[
\beta(A^*(\alpha_0)) = \min_{A: \alpha(A) = \alpha_0} \beta(A).
\]
In other words, among all decision rules with the same false rejection rate, the likelihood ratio test minimizes the false acceptance rate.
\end{restatable}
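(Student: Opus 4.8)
The plan is to prove the lemma by the classical pointwise ``swapping'' argument, which collapses the entire optimality statement into the observation that a single integrand is nonnegative. Write $A^{*} = A^{*}(\alpha_0)$ and $\gamma = \gamma(\alpha_0)$, and let $A \subset \mathcal{Z}$ be an arbitrary competing acceptance region with $\alpha(A) = \alpha_0$. Since $\alpha(A) = P_0(Z \notin A) = 1 - P_0(Z \in A)$, the constraint $\alpha(A) = \alpha_0 = \alpha(A^{*})$ is equivalent to the matched-mass condition $P_0(A) = P_0(A^{*}) = 1 - \alpha_0$. The goal is then to show $\beta(A^{*}) = P_1(A^{*}) \le P_1(A) = \beta(A)$.

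First I would establish the key pointwise inequality
\[
\bigl(\mathbbm{1}_{A^{*}}(z) - \mathbbm{1}_{A}(z)\bigr)\bigl(p_0(z) - \gamma\, p_1(z)\bigr) \ \ge\ 0 \qquad \text{for every } z \in \mathcal{Z},
\]
by a two-case check. If $z \in A^{*}$, then by definition of $A^{*}$ we have $p_0(z) \ge \gamma\, p_1(z)$, while $\mathbbm{1}_{A^{*}}(z) - \mathbbm{1}_{A}(z) = 1 - \mathbbm{1}_{A}(z) \ge 0$, so the product is nonnegative; if $z \notin A^{*}$, then $p_0(z) < \gamma\, p_1(z)$, while $\mathbbm{1}_{A^{*}}(z) - \mathbbm{1}_{A}(z) = -\mathbbm{1}_{A}(z) \le 0$, so the product is again nonnegative. (Strict positivity of $p_1$ on $\mathcal{Z}$ is what makes the ratio, and hence this dichotomy, well defined.)

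Next I would integrate the displayed inequality over $\mathcal{Z}$ and split into the $p_0$ and $p_1$ contributions, which gives
\[
\bigl(P_0(A^{*}) - P_0(A)\bigr) - \gamma\bigl(P_1(A^{*}) - P_1(A)\bigr) \ \ge\ 0 .
\]
The matched-mass condition annihilates the first bracket, leaving $-\gamma\bigl(P_1(A^{*}) - P_1(A)\bigr) \ge 0$. For $\alpha_0 \in (0,1)$ the threshold satisfies $\gamma = \gamma(\alpha_0) > 0$ (since $p_0, p_1$ are strictly positive densities on the common support), so dividing by $-\gamma$ yields $P_1(A^{*}) \le P_1(A)$, i.e. $\beta(A^{*}) \le \beta(A)$; taking the infimum over admissible $A$ gives $\beta(A^{*}) = \min_{A:\,\alpha(A)=\alpha_0}\beta(A)$. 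The boundary cases $\alpha_0 \in \{0,1\}$ force $A^{*} \in \{\mathcal{Z}, \emptyset\}$ up to null sets and are therefore immediate.

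The one ingredient I would flag rather than prove is the \emph{existence} of a threshold $\gamma(\alpha_0)$ for which $\alpha(A^{*}(\alpha_0)) = \alpha_0$ exactly — this is precisely what the lemma statement posits. It holds whenever the law of the likelihood ratio $p_0(Z)/p_1(Z)$ under $P_0$ has no atoms (continuous CDF); in the general, possibly atomic, case one recovers the result by replacing the deterministic test on the tie set $\{\,p_0 = \gamma\, p_1\,\}$ with a randomized decision, which leaves the swapping argument untouched. Given the hypothesis, the only genuinely delicate point above is getting the sign bookkeeping in the two-case pointwise check right, particularly on $\mathcal{Z}\setminus A^{*}$.
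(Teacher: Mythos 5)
The paper does not include a proof of this lemma; it is stated as a cited classical result (Neyman and Pearson, 1933; Lehmann, 1986), so there is no in-paper argument to compare your proposal against. Your proof is the standard textbook swapping argument and it is correct: the pointwise inequality $(\mathbbm{1}_{A^*}(z) - \mathbbm{1}_{A}(z))(p_0(z) - \gamma\, p_1(z)) \ge 0$ is verified by the right two-case check, its integration yields $(P_0(A^*)-P_0(A)) - \gamma(P_1(A^*)-P_1(A)) \ge 0$, the first bracket vanishes under the matched-$\alpha_0$ constraint, and dividing through by $\gamma > 0$ (or equivalently by $-\gamma$, flipping the sign) gives $P_1(A^*) \le P_1(A)$. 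You also correctly flag that the existence of a non-randomized threshold achieving $\alpha(A^*)=\alpha_0$ exactly is an assumption built into the statement (it can fail at atoms of the likelihood ratio distribution, where randomization is needed), rather than something to be proved, and you note correctly why $\gamma > 0$ when $\alpha_0 \in (0,1)$. No gaps.
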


Applied to selective classification, Lemma~\ref{lemma:np} suggests the optimal selection score is a \emph{likelihood ratio}:
\[
s(\rvx) = \frac{p_c(\rvx)}{p_w(\rvx)},
\]
where $p_c(\rvx)$ and $p_w(\rvx)$ denote the probability density associated with the classifier making a correct and wrong prediction respectively. Thresholding this score yields the lowest possible selective risk for any given coverage level.

\begin{restatable}[name=Informal]{corollary}{corollary_np}\label{corollary}
Any selector score $s(\rvx)$ that is a monotonic transformation of the likelihood ratio $\frac{p_0(\rvx)}{p_1(\rvx)}$ is also optimal under the Neyman--Pearson criterion.
\end{restatable}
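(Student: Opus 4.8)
The plan is to reduce any threshold test on $s$ to a threshold test on the raw likelihood ratio and then invoke Lemma~\ref{lemma:np} verbatim. Write $r(\rvx) := p_0(\rvx)/p_1(\rvx)$ and suppose $s = h \circ r$ for a strictly increasing $h$ defined on the range of $r$. The first step is the elementary observation that for any threshold $\gamma'$,
\[
\{\rvx \in \mathcal{Z} : s(\rvx) \ge \gamma'\} \;=\; \{\rvx \in \mathcal{Z} : r(\rvx) \ge h^{-1}(\gamma')\},
\]
since $h$ is a monotone bijection onto its range, so the two inequalities are equivalent pointwise. Consequently, as $\gamma'$ sweeps over the reals, the acceptance regions induced by $s$ are exactly the level-set regions $\{r \ge \gamma\}$ appearing in the statement of Lemma~\ref{lemma:np}, i.e.\ the family $\{A^*(\alpha_0)\}_{\alpha_0 \in [0,1]}$.

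Given this, the argument closes immediately. Fix a type I tolerance $\alpha_0$ and let $\gamma(\alpha_0)$ be the likelihood-ratio threshold with $\alpha(A^*(\alpha_0)) = \alpha_0$, as in the lemma; taking $\gamma' = h(\gamma(\alpha_0))$ produces a threshold on $s$ whose acceptance region equals $A^*(\alpha_0)$ and therefore, by the lemma, minimizes the type II error $\beta$ among all measurable regions with false-rejection rate $\alpha_0$. Since $\alpha_0$ was arbitrary, the selector $g_{s,\gamma'}$ traces out the same optimal $(\alpha,\beta)$ frontier as the raw likelihood ratio. Translating through the identifications $P_0 =$ (distribution of inputs on which $f$ is correct) and $P_1 =$ (distribution of inputs on which $f$ errs), and noting that coverage $\phi$ and selective risk $R$ are both determined by $(\alpha,\beta)$ together with the base rate $P(\mathcal{C})$, the risk--coverage frontier achieved by $s$ coincides with that of the raw likelihood ratio, which is optimal by Lemma~\ref{lemma:np}.

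The only delicate point, and the reason the corollary is stated informally, is degeneracy at the boundary: if $h$ is merely weakly increasing (flat on some interval) or if $r$ has atoms, then sweeping $\gamma'$ may not realize every value of $\alpha_0 \in [0,1]$ exactly, and a randomized test is needed --- but this is exactly the same caveat already present in Lemma~\ref{lemma:np}, so no new difficulty is introduced. I would handle it by taking $h$ strictly increasing (the intended reading of ``monotonic transformation'') and, if desired, assuming $r$ is atomless, or else phrasing the conclusion up to the usual randomization on the critical level set. The strictly decreasing case follows by the same argument applied to $-s$, with the threshold inequality reversed.
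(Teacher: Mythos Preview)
Your proposal is correct and follows the same idea the paper uses: monotone transforms preserve the ordering of scores, hence the threshold acceptance regions coincide with those of the raw likelihood ratio, and Lemma~\ref{lemma:np} applies. The paper does not give a formal proof of this corollary at all---it is dispatched in a single sentence in the main text (``monotonic transformations \ldots\ preserve the ordering of scores and hence do not alter the resulting acceptance region'')---so your write-up is actually more careful than the paper's, in particular your handling of the weakly-monotone / atom caveat and the decreasing case.
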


Corollary~\ref{corollary} follows directly from the lemma, since monotonic transformations (e.g., logarithmic or affine maps) preserve the ordering of scores and hence do not alter the resulting acceptance region. We define a score function $s(\rvx)$ to be \textit{Neyman--Pearson optimal} if it is a monotonic transformation of the likelihood ratio $p_c(\rvx) / p_w(\rvx)$. In practice, the true likelihood ratio is not accessible, so we approximate it or construct a monotonic proxy that captures the posterior odds of a correct versus incorrect prediction for a given input.

Note that $p_c(\rvx)$ and $p_w(\rvx)$ are general and naturally accounts for distribution shifts; $p_c(\rvx)$ ($p_w(\rvx)$) includes all samples that the classifier classifies correctly (wrongly), \textit{regardless of whether they are ID or a distribution shift}. Therefore, this simplifies our framework compared to prior works that consider ID and OOD distributions separately~\citep{xia2022augmenting, narasimhan2024plugin}. 

In what follows, we first reinterpret existing selection scores from the literature as implicit approximations to this likelihood ratio and show conditions under which they are NP optimal. We then introduce two new distance-based selection functions inspired by the NP framework. Finally, we propose a hybrid score that linearly combines multiple selectors which we find performs well in practice. Throughout, the assumptions used in our theoretical results are meant to clarify the connection to NP optimality and the structure of an optimal selector, rather than to prescribe conditions that must hold in practice. Introduction and brief details of the scores discussed in this section are provided in Appendix~\ref{app:baselines}.

\subsection{Logit-Based Scores as Approximations to Likelihood Ratios}

We consider two popular confidence scores, Maximum Softmax Probability (MSP)~\citep{hendrycks2017a} and Raw Logits (RLog)~\citep{liang2024selective}, and interpret them as approximations to likelihood ratio tests in the NP sense. This provides theoretical justification for their empirical success as selector scores in selective classification. 

Let $l^{(1)} \geq l^{(2)} \geq \cdots \geq l^{(K)}$ denote the logits output by a classifier for a sample $\rvx$, sorted in descending order. Define the corresponding softmax probabilities $d^{(i)} = \mathrm{softmax}(l^{(i)})$. The MSP score is given by $s_{\mathrm{MSP}}(\rvx) = d^{(1)}$, while the RLog score is defined as $s_{\mathrm{RLog}}(\rvx) = l^{(1)} - l^{(2)}$. MSP has become a standard baseline for OOD detection and selective classification~\citep{hendrycks2017a, geifman2017selective}, and RLog has been recently proposed as a strong score for selective classification~\citep{liang2024selective}.

\begin{restatable}{theorem}{msprl}
Let $\hat y(x)=\arg\max_{k\in\{1,\dots,K\}} p_\theta(y=k\mid x)$ be the predicted label and define
the event $C:=\{\hat y(X)=Y\}$ that the classifier is correct. Assume the classifier is calibrated for top-1 correctness, i.e., $P(C\mid X=\rvx)=\max_k p_\theta(y=k\mid \rvx)=:d^{(1)}(\rvx)$. Then MSP is Neyman--Pearson optimal for selective classification. Moreover, under the additional assumption that the softmax distribution is concentrated on the top two classes, i.e., $L := \sum_{i \geq 3} d^{(i)} \ll d^{(2)}$, the RLog score is also Neyman--Pearson optimal.

\label{thm:msp_rl}
\end{restatable}

The proof provided in Appendix~\ref{app:proofs} shows that under these assumptions, both MSP and RLog are monotonic transformations of the likelihood ratio $p_c / p_w$, and therefore is NP optimal by Corollary~\ref{corollary}. Of course, these assumptions are not always satisfied in practice. Prior work~\citep{guo2017calibration} has shown that modern neural classifiers tend to be poorly calibrated, and has proposed post-hoc calibration methods such as temperature scaling. Notably, RLog has been shown to be invariant to temperature scaling~\citep{liang2024selective}, making it robust to miscalibration and a compelling choice in practice. This aligns with our empirical findings in Sec.~\ref{sec:exp}, where RLog generally outperforms MSP (which corresponds to temperature scaling with $T=1$). While the effect of calibration is an important factor in logit-based methods~\citep{cattelan2023fix, fisch2022calibrated}, it lies beyond the scope of this work. 

\subsection{Neyman--Pearson Optimal Distance Scores}
\begin{figure}
    \centering
    \includegraphics[width=0.55\linewidth]{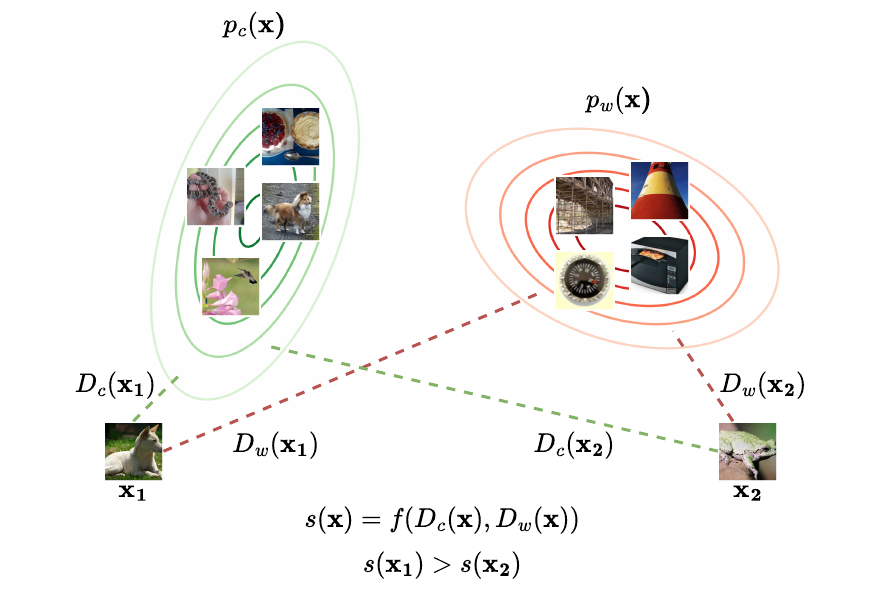}
    \caption{Illustration of our proposed Neyman--Pearson optimal distance-based selective classification methods. We estimate the likelihoods of correct and incorrect predictions ($p_c$ and $p_w$) as a function of distances to training sets consisting of correctly and incorrectly classified samples: $s(\rvx)=f(D_c(\rvx), D_w(\rvx))$, where $f$ here denotes a function. For example, $\rvx_1$ is “closer” to $p_c$ and “farther” from $p_w$ than $\rvx_2$, and should therefore receive a higher score.}
    \label{fig:main_diagram}
    \vspace{-0.4cm}
\end{figure}
The logit-based scores discussed in the previous section rely on classifier logit calibration, a condition often violated in practice~\citep{guo2017calibration}. To avoid this dependency, we consider distance-based methods that make alternative assumptions independent of calibration. As we show below, these methods approximate the likelihood ratio $p_c / p_w$ by leveraging spatial relationships in feature space.

Two distance methods widely used in OOD detection are the Mahalanobis distance  (MDS)~\citep{lee2018simple} and $k$-Nearest Neighbors (KNN)~\citep{sun2022out}. Both rely on computing distances between a test sample and training features (see Appendix~\ref{app:baselines} for details). Briefly, MDS is defined as
$ s_{\text{MDS}}(\rvx) = \max_i -(\phi(\rvx) - \mu_i)^\top \Sigma^{-1} (\phi(\rvx) - \mu_i)$,
where $\phi(\rvx)$ denotes the extracted feature of $\rvx$, typically from the penultimate or final layer of a trained deep network, $\mu_i$ is the empirical mean feature of class $i$, and $\Sigma$ is a shared covariance matrix. In contrast, KNN scores inputs by the negative distance to the $k$-th nearest training feature vector. We introduce $\Delta$-MDS and $\Delta$-KNN, which are modified versions of these scores that explicitly incorporate insights from the NP lemma by estimating separate distributions for correctly and incorrectly classified training samples. Figure~\ref{fig:main_diagram} gives an overview of our approach, and we provide pseudocode for our proposed methods in Appendix~\ref{app:pseudocode}.

\paragraph{$\Delta$-MDS. } 
Instead of estimating a single distribution per class, we maintain two sets of statistics per class: $\{\mu_i^c, \Sigma^c\}_{i=1}^K$ and $\{\mu_i^w, \Sigma^w\}_{i=1}^K$, corresponding to the mean and shared covariance of features for training samples that the classifier predicts correctly and wrongly, respectively. These quantities are easily estimated as the true labels are known. We then define the $\Delta$-MDS score as the difference in Mahalanobis distances between the two distributions:
\begin{equation}
    s_{\Delta\text{-MDS}}(\rvx)= D_{\text{MDS}}(\rvx;\mu_i^{c}, \Sigma^{c}) - D_{\text{MDS}}(\rvx;\mu_i^w, \Sigma^w) 
\end{equation}
where $D_{\text{MDS}}(\rvx;\mu_i, \Sigma) = \max_i -(\phi(\rvx)-\mu_i)^\top\Sigma^{-1}(\phi(\rvx)-\mu_i)$ is the standard Mahalanobis score. The score intuitively increases when the input is closer (in Mahalanobis sense) to the ``correctly classified'' region and farther from the ``wrongly classified'' region in feature space. We now formalize this intuition:
\begin{restatable}{theorem}{deltamds}\label{thm:deltamds}
Let $Z = \phi(\rvx) \in \mathbb{R}^d$ be the feature representation of input $\rvx$. Let $\mathcal{C}$ be the event the classifier makes a correct prediction and $\neg \mathcal{C}$ its negation. Assume $Z|\mathcal{C} \sim p_{c} = \mathcal{N}(\mu_i^c, \Sigma^{c})$ and $Z|\neg \mathcal{C} \sim p_w = \mathcal{N}(\mu_i^w, \Sigma^w)$. Then the $\Delta$-MDS score $s_{\Delta\text{-MDS}}(\rvx)$ is Neyman–Pearson optimal for selective classification.
\end{restatable}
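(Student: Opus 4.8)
The plan is to apply Corollary~\ref{corollary}: it suffices to exhibit $s_{\Delta\text{-MDS}}(\rvx)$ as a monotonic transformation of the likelihood ratio $p_c(z)/p_w(z)$, where $z=\phi(\rvx)$. Concretely I would show it is an \emph{affine} transformation of the log-likelihood ratio $\log\bigl(p_c(z)/p_w(z)\bigr)$, which is itself a monotone transform of $p_c/p_w$, and then invoke the corollary to conclude Neyman--Pearson optimality.

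First I would make the class structure in the hypothesis explicit. Since $D_{\text{MDS}}(\rvx;\mu_i,\Sigma)$ maximizes over the class index $i$, the natural reading of ``$Z|\mathcal{C}\sim\mathcal{N}(\mu_i^c,\Sigma^c)$'' is a class-conditional Gaussian mixture, $p_c(z)=\sum_i \pi_i^c\,\mathcal{N}(z;\mu_i^c,\Sigma^c)$, and likewise $p_w(z)=\sum_i \pi_i^w\,\mathcal{N}(z;\mu_i^w,\Sigma^w)$. Applying the hard-assignment (log-sum-exp $\approx$ max) approximation that underlies the original MDS score~\citep{lee2018simple}, I would write
\[
\log p_c(z)\;\approx\;\max_i\Bigl[-\tfrac12(z-\mu_i^c)^\top(\Sigma^c)^{-1}(z-\mu_i^c)\Bigr]-\tfrac12\log\det(2\pi\Sigma^c)+c_c,
\]
where $c_c$ collects the ($\rvx$-independent) log-prior terms, and analogously for $p_w$. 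Recognizing that $\max_i\bigl[-(z-\mu_i^c)^\top(\Sigma^c)^{-1}(z-\mu_i^c)\bigr]=D_{\text{MDS}}(\rvx;\mu_i^c,\Sigma^c)$ gives $\log p_c(z)\approx\tfrac12 D_{\text{MDS}}(\rvx;\mu_i^c,\Sigma^c)+\text{const}$ and similarly for $w$.

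Subtracting, the log-determinant and log-prior terms are constants that cancel or are absorbed, leaving
\[
\log\frac{p_c(z)}{p_w(z)}\;\approx\;\tfrac12\Bigl(D_{\text{MDS}}(\rvx;\mu_i^c,\Sigma^c)-D_{\text{MDS}}(\rvx;\mu_i^w,\Sigma^w)\Bigr)+C\;=\;\tfrac12\,s_{\Delta\text{-MDS}}(\rvx)+C.
\]
Hence $s_{\Delta\text{-MDS}}$ is a strictly increasing affine function of $\log(p_c/p_w)$, and Corollary~\ref{corollary} yields Neyman--Pearson optimality. I would present the single-Gaussian-per-hypothesis case (dropping the $\max_i$) first, as the clean special case where this identity is \emph{exact}, and then state the mixture case with the max approximation.

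The main obstacle is precisely the log-sum-exp-to-max step: it is an equality only in the limit of well-separated class means (or a single dominant mixture component), so I would either add a separation assumption or — more in line with the paper's framing — flag it as the same modeling approximation that motivates MDS, noting that the $\argmax$ class may differ between numerator and denominator, which is harmless since each $D_{\text{MDS}}$ term takes its own independent maximum. A secondary point is handling the class priors $\pi_i^c,\pi_i^w$: assuming them uniform (or at least matched across the two mixtures) is what guarantees the residual terms are genuinely constant in $\rvx$.
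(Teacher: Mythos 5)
Your proof is correct and rests on the same core identity as the paper's: the Mahalanobis distance is, up to $\rvx$-independent constants, twice the Gaussian log-likelihood, so $s_{\Delta\text{-MDS}}$ is a strictly increasing affine function of $\log(p_c/p_w)$ and Corollary~\ref{corollary} applies. Where you go beyond the paper is in confronting the $\max_i$ inside $D_{\text{MDS}}$. The paper's proof simply writes $D_{\text{MDS}}(\rvx;\mu_i^c, \Sigma^c) = 2\log p_c(\rvz;\mu_i^c,\Sigma^c) + d\log(2\pi) + \log\det\Sigma^c$ as if $p_c$ were a single Gaussian, leaving the role of the class maximum (and of the unquantified subscript $i$ in the theorem statement) implicit. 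You correctly observe that the honest reading of the hypothesis is a class-conditional Gaussian mixture, that the identity is exact only in the single-component case, and that in the mixture case one needs a log-sum-exp $\approx$ max approximation together with uniform or matched class priors to make the residual terms genuinely constant in $\rvx$. You also correctly note that it is harmless for the two $D_{\text{MDS}}$ terms to take different $\argmax$ classes, since each independently approximates its own mixture log-likelihood. In short: same decomposition, but you do the bookkeeping the paper elides, which is a genuine (if modest) gain in rigor.
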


The proof is provided in Appendix~\ref{app:proofs}, which shows that $s_{\Delta\text{-MDS}}$ is a monotonic transformation of the likelihood ratio $p_c / p_w$, and thus is NP optimal as per Corollary~\ref{corollary}. The Gaussian assumption on feature representations is supported both empirically and theoretically via connections between Gaussian Discriminant Analysis and softmax classifiers~\citep{lee2018simple}, making $\Delta$-MDS well-suited for modern deep classifiers trained on standard supervised learning objectives.

\paragraph{$\Delta$-KNN. } Next we introduce $\Delta$-KNN, a non-parametric distance-based score inspired by the NP framework. Let $A_c = \{\phi_c(\rvx_1), \dots, \phi_c(\rvx_{N_c})\}$ and $A_w = \{\phi_w(\rvx_1), \dots, \phi_w(\rvx_{N_w})\}$ denote the feature representations of training samples that the classifier predicted correctly and wrongly respectively, and $N_c = |A_c|$ and $N_w = |A_w|$. Let $\rvz = \phi(\rvx)$ be the feature vector of a test input $\rvx$. Define $u_k(\rvz)$ and $v_k(\rvz)$ as the Euclidean distances from $\rvz$ to its $k$-th nearest neighbors in $A_c$ and $A_w$. We define the $\Delta$-KNN score as the difference in log-distances:
\begin{equation}
    s_{\Delta\text{-KNN}}(\rvx) = D_{\text{KNN}}(\rvx; A_{c}) - D_{\text{KNN}}(\rvx; A_{w}) 
\end{equation}
where $D_{\text{KNN}}(\rvx; A_c) = -\log[u_k(\phi(\rvx))]$ and $D_{\text{KNN}}(\rvx; A_w) = -\log[v_k(\phi(\rvx))]$. This score measures how much closer a test point is to the region of correctly classified samples compared to incorrectly classified ones. We now show that $\Delta$-KNN is asymptotically NP optimal:
\begin{restatable}{theorem}{deltaknn}\label{thm:deltaknn}
Let $Z = \phi(\rvx) \in \mathbb{R}^d$ be the feature representation of input $\rvx$, and let $\mathcal{C}$ denote the event that the classifier makes a correct prediction. Suppose $Z \mid \mathcal{C} \sim p_c$ and $Z \mid \neg \mathcal{C} \sim p_w$ are arbitrary continuous densities bounded away from zero. Let $N_c = |A_c|$ and $N_w = |A_w|$. If $k \to \infty$ while $k/N_c \to 0$ and $k/N_w \to 0$ as $N_c, N_w \to \infty$, then $s_{\Delta\text{-KNN}}(\rvx)$ is a Neyman--Pearson optimal selector.
\end{restatable}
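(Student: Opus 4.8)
The plan is to reduce the statement to the classical pointwise consistency of $k$-nearest-neighbor density estimation, together with Corollary~\ref{corollary}. Recall the Loftsgaarden--Quesenberry estimator: given $N$ i.i.d.\ samples from a density $p$ on $\R^d$ and the Euclidean distance $r_k(\rvz)$ from a query point $\rvz$ to its $k$-th nearest sample, the estimate $\hat p_N(\rvz) = k / (N\, V_d\, r_k(\rvz)^d)$, with $V_d$ the volume of the unit Euclidean ball, satisfies $\hat p_N(\rvz) \to p(\rvz)$ in probability at every continuity point of $p$ as long as $k\to\infty$ and $k/N\to 0$. I would apply this separately to $A_c$ (size $N_c$, underlying density $p_c$, with $r_k = u_k$) and to $A_w$ (size $N_w$, underlying density $p_w$, with $r_k = v_k$); the hypotheses that $p_c, p_w$ are continuous and bounded away from zero on the shared support guarantee the estimators are well defined and consistent everywhere on that support, and the rate conditions $k/N_c\to 0$, $k/N_w\to 0$ are exactly what is assumed.

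The second step is an elementary algebraic identity linking the score to these estimators. Taking logarithms and rearranging,
\[
D_{\text{KNN}}(\rvx; A_c) = -\log u_k(\rvz) = \tfrac{1}{d}\big(\log \hat p_c(\rvz) + \log N_c + \log V_d - \log k\big),
\]
and identically for $A_w$ with $v_k$, $\hat p_w$, $N_w$. Subtracting,
\[
s_{\Delta\text{-KNN}}(\rvx) = D_{\text{KNN}}(\rvx; A_c) - D_{\text{KNN}}(\rvx; A_w) = \frac{1}{d}\log\frac{\hat p_c(\rvz)}{\hat p_w(\rvz)} + \frac{1}{d}\log\frac{N_c}{N_w},
\]
so $s_{\Delta\text{-KNN}}$ is exactly a positive affine function of the log-ratio of the two plug-in density estimates, with slope $1/d$ and an $\rvx$-independent offset $\tfrac1d\log(N_c/N_w)$ (which itself converges to the constant $\tfrac1d\log\big(P(\mathcal C)/P(\neg\mathcal C)\big)$, though its value is irrelevant since it does not affect the ordering of scores). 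By the first step, this log-ratio converges in probability, pointwise on the support, to $\log\big(p_c(\rvz)/p_w(\rvz)\big)$, i.e.\ to a strictly monotone transformation of the true likelihood ratio $p_c/p_w$.

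To conclude, I would pass from pointwise consistency of the score to asymptotic optimality of the induced selector. Fix a type~I tolerance $\alpha_0$, let $A^\star$ be the NP-optimal acceptance region from Lemma~\ref{lemma:np} with $P_0$ the law of $Z\mid\mathcal C$ and $P_1$ the law of $Z\mid\neg\mathcal C$, and let $\hat A_N$ be the region obtained by thresholding $s_{\Delta\text{-KNN}}$ at the matching empirical quantile. Since $s_{\Delta\text{-KNN}}$ converges to $\tfrac1d\log(p_c/p_w)$ at $P_0$- and $P_1$-almost every point, and the boundary level set $\{p_c/p_w = \gamma(\alpha_0)\}$ carries zero mass under both $P_0$ and $P_1$ (continuity of the densities, using the assumption in Lemma~\ref{lemma:np} that $\gamma(\alpha_0)$ is attained), a dominated-convergence argument gives $\alpha(\hat A_N)\to\alpha_0$ and $\beta(\hat A_N)\to\beta(A^\star)$; hence $s_{\Delta\text{-KNN}}$ is Neyman--Pearson optimal in the limit. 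Equivalently, one may simply invoke Corollary~\ref{corollary} with the limiting score $\tfrac1d\log(p_c/p_w)$.

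I expect the main obstacle to be this final step: making precise the sense in which the selector is ``asymptotically NP optimal'' and controlling the error probabilities, which are integrals over \emph{data-dependent} regions, given that $k$-NN density estimates are not uniformly consistent without additional smoothness or tail assumptions. The cleanest route is to work with pointwise almost-everywhere convergence of the density ratio plus a no-atom condition on $p_c/p_w$ at the threshold, and to appeal to standard results on plug-in Neyman--Pearson classification to bridge from a consistent density-ratio estimate to an asymptotically rate-optimal test; the remaining bookkeeping with the two differing sample sizes $N_c, N_w$ and the constant offset is routine.
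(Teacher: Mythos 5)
Your argument tracks the paper's own proof essentially step for step: the Loftsgaarden--Quesenberry consistency result for $k$-NN density estimates, the algebraic identity rewriting $s_{\Delta\text{-KNN}}$ as $\tfrac1d\log(\hat p_c/\hat p_w)$ plus an $\rvx$-independent offset, and an appeal to Corollary~\ref{corollary} in the limit. Where you go beyond the paper is the final paragraph: the paper simply asserts ``the empirical likelihoods converge to the true likelihoods, thus $s_{\Delta\text{-KNN}}$ is Neyman--Pearson optimal,'' whereas you correctly flag that passing from pointwise (in-probability) consistency of the estimator to asymptotic optimality of the induced acceptance region requires an additional argument, e.g.\ a no-atom condition at the threshold level set and a dominated-convergence step for the type~I/II error integrals over data-dependent regions. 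That observation is sound and sharpens a step the paper leaves implicit; the rest of your derivation, including the harmless sign convention on the $\log(N_c/N_w)$ offset, matches the paper's.
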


The proof is provided in Appendix~\ref{app:proofs}. As in previous cases, it relies on showing that $s_{\Delta\text{-KNN}}$ is a monotonic transformation of the likelihood ratio $p_c / p_w$. Importantly, this result does not require parametric assumptions on the form of $p_c$ or $p_w$, unlike $\Delta$-MDS. However, it does depend on asymptotic properties of the $k$-nearest neighbor density estimator, and the required conditions on $k$, $N_c$, and $N_w$ may be difficult to satisfy in finite-sample settings. As such, both methods have their tradeoffs in terms of modeling assumptions.

In practice, we replace the single $k$-th neighbor distance with the average log-distance to the top $k$ neighbors. Specifically, we use:
$D_{\text{KNN}}(\rvx; A_c) = -\frac{1}{k}\sum_{i=1}^k \log[u_i(\phi(\rvx))]$ and $D_{\text{KNN}}(\rvx; A_w) = -\frac{1}{k}\sum_{i=1}^k \log[v_i(\phi(\rvx))]$. We find that this smoother version improves empirical performance, as shown in our ablation studies in Sec.~\ref{sec:ablations}. While this modification deviates from the form in Theorem~\ref{thm:deltaknn}, we include a discussion in Appendix~\ref{app:logmean} suggesting NP optimality holds for the averaged log-distance formulation under standard assumptions.

\subsection{Linear Combinations Of Distance and Logit-based Scores}
The selector scores we discussed rely on different modeling assumptions and exhibit complementary strengths, as discussed in \citet{wang2022vim}. Logit-based scores utilize the classifier's learned boundaries, while distance-based methods depend on geometric structures in feature space defined by training samples. We are thus motivated to leverage their respective advantages by proposing a simple yet effective solution: linearly combining selector scores. Intuitively, this allows each score to compensate for the limitations of the other. The following lemma formalizes the NP optimality of such a linear combination:

\begin{restatable}{lemma}{linear}\label{lemma:linear}
Let $s_1(\rvx) \in \mathbb{R}$ and $s_2(\rvx) \in \mathbb{R}$ be two selector scores. Assume both are Neyman--Pearson optimal; that is, $s_1(\rvx)$ is a monotone transform of $p_c^{(1)}/p_w^{(1)}$ and $s_2(\rvx)$ is a monotone transform of $p_c^{(2)}/p_w^{(2)}$. Then for any scalar $\lambda \in \mathbb{R}$, $t(\rvx) = s_1(\rvx) + \lambda s_2(\rvx)$ is a monotonic transformation of $p_{c}^{(1)}(p_{c}^{(2)})
^{\lambda}/p_w^{(1)} (p_w^{(2)})^{\lambda}$.
\end{restatable}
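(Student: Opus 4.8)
The plan is to reduce the additive combination of the two scores to a single logarithm of a product of powers of densities, and then invoke the fact that $\log$ (equivalently $\exp$) is a strictly increasing bijection of $(0,\infty)$, so that composing with it preserves acceptance regions. Before doing so I would fix the precise reading of the hypothesis that makes the statement hold: by Corollary~\ref{corollary}, Neyman--Pearson optimality is a property of the equivalence class of a score under strictly monotone reparametrizations, and for combining scores \emph{additively} the natural canonical representative of that class is the \emph{log}-likelihood ratio, $s_i(\rvx) = \log\!\bigl(p_c^{(i)}(\rvx)/p_w^{(i)}(\rvx)\bigr)$. This is also the form in which the selectors of this paper literally appear: $\Delta$-MDS and $\Delta$-KNN are defined as differences of log-densities, and MSP/RLog reduce to log-ratios (up to affine terms) in Theorem~\ref{thm:msp_rl}. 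I would state this normalization explicitly at the start of the proof.

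The key steps, in order: (1) write $r_i(\rvx) := p_c^{(i)}(\rvx)/p_w^{(i)}(\rvx) > 0$, which is well defined since the densities are strictly positive on a shared support, and, using the normalization above, $s_i(\rvx) = \log r_i(\rvx)$; (2) compute directly
\[
t(\rvx) = s_1(\rvx) + \lambda\, s_2(\rvx) = \log r_1(\rvx) + \lambda \log r_2(\rvx) = \log\!\left( \frac{p_c^{(1)}(\rvx)\,\bigl(p_c^{(2)}(\rvx)\bigr)^{\lambda}}{p_w^{(1)}(\rvx)\,\bigl(p_w^{(2)}(\rvx)\bigr)^{\lambda}} \right),
\]
which is valid for every real $\lambda$ (positivity makes the powers and the logarithm well defined); (3) since $z \mapsto \log z$ is strictly increasing on $(0,\infty)$, $t(\rvx)$ is a strictly monotone transformation of $Q(\rvx) := p_c^{(1)}(p_c^{(2)})^{\lambda}/\bigl(p_w^{(1)}(p_w^{(2)})^{\lambda}\bigr)$, which is exactly the claim; combined with Lemma~\ref{lemma:np} this further shows $t$ is NP-optimal for the family of acceptance regions obtained by thresholding $Q$.

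The main obstacle is not computational but one of correctly phrasing the hypothesis. For \emph{arbitrary} strictly increasing $\psi_1,\psi_2$ with $s_i = \psi_i(r_i)$, the sum $\psi_1(r_1) + \lambda\psi_2(r_2)$ need not be a function of $r_1 r_2^{\lambda}$ at all: holding $r_1 r_2^{\lambda}$ fixed and moving along the one-parameter family $(r_1 c,\ r_2 c^{-1/\lambda})$ forces $\psi_1,\psi_2$ to be logarithmic up to affine terms. So the proof must adopt the log-likelihood-ratio normalization above (or, equivalently, restrict to scores that are themselves affine transforms of log-ratios — precisely the case for every selector in this paper). Affine terms are harmless: they shift $t$ by a constant and rescale the weight $\lambda$, leaving $t$ a monotone transform of a quantity of the same product-of-powers form. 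A minor secondary point to record is that the identity in step (2) needs no case split on the sign of $\lambda$, since $\lambda \log r_2 = \log(r_2^{\lambda})$ regardless of sign. With the hypothesis thus pinned down, the argument is the three-line computation above.
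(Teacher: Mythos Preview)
Your proposal is correct and follows essentially the same approach as the paper: normalize each $s_i$ to the log-likelihood ratio $\log(p_c^{(i)}/p_w^{(i)})$, then compute $t(\rvx)=\log r_1+\lambda\log r_2=\log\bigl(p_c^{(1)}(p_c^{(2)})^{\lambda}/p_w^{(1)}(p_w^{(2)})^{\lambda}\bigr)$. Your explicit discussion of why the log-normalization is needed (and that arbitrary monotone $\psi_i$ would not work) is in fact more careful than the paper, which simply asserts the normalization ``without loss of generality.''
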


The proof provided in Appendix~\ref{app:proofs} follows by expressing $t(\rvx)$ as a log-product of likelihood ratios. Thus, $t(\rvx)$ remains NP optimal under the assumption that the density for each hypothesis takes the form of a multiplicative (or ``tilted'') product: $p_c^{(1)} (p_c^{(2)})^\lambda /Z_c$ and similarly for $p_w$, where $Z_c$ is a normalization constant. In practice, we find that combining a distance-based score (e.g., $\Delta$-MDS) with a logit-based score (e.g., RLog) leads to the best performance. We refer to such combinations by concatenating their names, e.g., $\Delta$-MDS-RLog. We discuss fitting parameters like $\lambda$ in Sec.~\ref{sec:exp}.


\vspace{-0.3cm}
\section{Related Works}
The study of classification with a reject option has a long history, beginning with cost-based formulations~\citep{chow1970optimum} and extensions to classical models like SVMs~\citep{fumera2002support,bartlett2008classification} and nearest neighbors~\citep{hellman1970nearest}. In deep learning, \citet{lecun1989handwritten} explored rejection via top logit activations. Later, the risk–coverage and classifier–selector frameworks were formalized~\citep{el2010foundations,geifman2017selective}, with methods like MSP and Monte Carlo dropout proposed to provide confidence-based selection~\citep{gal2016dropout}.

Subsequent works have extended this direction by studying popular logit and distance-based scores, many originally developed for OOD detection. Examples include MSP~\citep{hendrycks2017a},
MaxLogit~\citep{hendrycks2019scaling}, Energy~\citep{liu2020energy}, MDS~\citep{lee2018simple}, and KNN~\citep{sun2022out}. A limitation of logit-based methods is their reliance on classifier calibration. Although calibration is not the focus of our work, several studies have examined its impact on selective classification performance~\citep{cattelan2023fix, galil2023what}. An alternative line of research uses conformal prediction to construct calibrated prediction sets with formal guarantees~\citep{vovk2005algorithmic,angelopoulos2021gentle,bates2021distribution, angelopoulos2024conformal}. While such methods could be adapted for selective classification, they differ fundamentally from our goal of designing scoring functions optimized for selective risk. \cite{jiang2018trust} proposes a trust score by comparing the model’s prediction with class-conditional KNN distances to estimate if a sample will be correctly classified, but it was shown to be ineffective on high-dimensional images.

Some methods incorporate rejection directly into training. For instance, SelectiveNet~\citep{geifman2019selectivenet} adds a dedicated rejection head, while Deep Gamblers~\citep{liu2019deep} and Self-Adaptive Training~\citep{huang2020self} introduce a reject class and train the model to abstain. These methods require architectural modifications and joint training. In contrast, our work focuses on post-hoc methods that can be applied to pretrained classifiers without retraining.

Related to our work are approaches that combine selective classification and OOD detection, termed SCOD~\citep{xia2022augmenting, narasimhan2024plugin}. Such methods consider the ID classification and OOD distributions separately and seek to combine them into a single score function. These approaches are typically designed for semantic shifts and need adaptation for covariate shift. Our formulation avoids this by representing all distribution shifts through the general pair $(p_c, p_w)$, which does not require distinguishing between shift types.

Finally, most closely related to our work is \citet{liang2024selective}, who study selective classification under both semantic and covariate shifts and introduce the Raw Logit (RLog) score. Our work differs in several ways: 1) we focus on covariate shifts, which we argue is more relevant in modern settings where large and variable label sets (e.g., from vision-language models) mitigate label drift; 2) we introduce a unified theoretical framework grounded in the Neyman--Pearson lemma, from which we derive new selector scores with formal optimality guarantees; and 3) we evaluate our methods on a broader class of models, including VLMs, whereas \citet{liang2024selective} focus exclusively on standard supervised learning paradigms.

\section{Experiments}
\label{sec:exp}

\begin{table*}
  \centering
  \setlength{\tabcolsep}{2.7pt}
  \renewcommand{\arraystretch}{1.1}
  \scriptsize
  \caption{DFN CLIP AURC (A) and NAURC (N) results on ImageNet and its covariate shifted variants at full coverage. Lower is better. AURC results are on the $10^{-2}$ scale. \textbf{Bold} and \underline{underline} denotes the best and second best result respectively. ``Avg (1K)" denotes average results over datasets with full 1K-class coverage, while ``Avg (all)" is the average result over all datasets.}
  \label{tab:clip_sc}
\begin{tabular}{@{}ccccccccccccccccccc@{}}
\toprule
 &
  \multicolumn{2}{c}{\textbf{Im-1K}} &
  \multicolumn{2}{c}{\textbf{Im-R}} &
  \multicolumn{2}{c}{\textbf{Im-A}} &
  \multicolumn{2}{c}{\textbf{ON}} &
  \multicolumn{2}{c}{\textbf{Im-V2}} &
  \multicolumn{2}{c}{\textbf{Im-S}} &
  \multicolumn{2}{c}{\textbf{Im-C}} &
  \multicolumn{2}{c}{\textbf{Avg (1K)}} &
  \multicolumn{2}{c}{\textbf{Avg (all)}} \\ 
  \cmidrule(lr){2-3}
\cmidrule(lr){4-5}
\cmidrule(lr){6-7}
\cmidrule(lr){8-9}
\cmidrule(lr){10-11}
\cmidrule(lr){12-13}
\cmidrule(lr){14-15}
\cmidrule(lr){16-17}
\cmidrule(lr){18-19}
  Method        & A    & N     & A & N     & A    & N     & A    & N     & A    & N     & A    & N     & A    & N     & A    & N     & A    & N     \\ \midrule
MSP       & 9.08 & 0.542 & 2.00        & 0.344       & 2.29       & 0.179       & 8.87 & 0.268 & 9.39 & 0.521 & 12.3 & 0.524 & 15.1 & 0.328 & 11.5 & 0.479 & 8.43 & 0.387 \\
MaxLogit  & 9.08 & 0.542 & 2.21        & 0.385       & 2.96       & 0.249       & 12.4 & 0.437 & 9.35 & 0.518 & 12.3 & 0.525 & 17.7 & 0.423 & 13.9 & 0.502 & 11.3 & 0.440 \\
Energy    & 14.2 & 0.901 & 5.81        & 1.06        & 12.2       & 1.2         & 33.3 & 1.43  & 14.9 & 0.882 & 20.8 & 0.975 & 49.2 & 1.59  & 24.8 & 1.09  & 21.5 & 1.15  \\
MDS       & 11.3 & 0.699 & 3.41        & 0.608       & 3.21       & 0.274       & 16.4 & 0.624 & 11.7 & 0.672 & 15.2 & 0.68  & 17.6 & 0.426 & 13.9 & 0.619 & 11.3 & 0.569 \\
KNN       & 10.5 & 0.643 & 2.51        & 0.439       & 2.67       & 0.219       & 11.4 & 0.390 & 10.9 & 0.618 & 14.1 & 0.619 & 16.7 & 0.389 & 13.1 & 0.567 & 9.83 & 0.474 \\
RLog       & 4.83 & 0.246 & \underline{0.808} & \underline{0.122} & \underline{1.59} & \underline{0.108} & 7.73 & 0.214 & 5.27 & 0.250 & 6.84 & 0.234 & 12.6 & 0.226 & 7.39 & 0.239 & 5.67 & 0.200 \\
SIRC      & 16.3 & 1.04  & 3.90        & 0.701       & 8.48       & 0.817       & 15.1 & 0.564 & 17.1 & 1.03  & 20.7 & 0.968 & 22.1 & 0.612 & 19.05 & 0.913 & 14.8 & 0.819 \\
$\Delta$-MDS & 5.00 & 0.257 & 2.19        & 0.380       & 2.43       & 0.194       & 9.63 & 0.304 & 5.43 & 0.260 & 8.28 & 0.311 & 12.5 & 0.224 & 7.81 & 0.263 & 6.50 & 0.276 \\
$\Delta$-KNN & 4.60 & 0.230 & 1.42        & 0.237       & 1.99       & 0.149       & 8.52 & 0.252 & 4.99 & 0.231 & 7.55 & 0.272 & 12.1 & 0.207 & 7.32 & 0.235 & 5.89 & 0.225 \\
$\Delta$-MDS-RLog &
  \underline{4.13} &
  \underline{0.197} &
  1.09 &
  0.175 &
  1.60 &
  0.109 &
  \textbf{7.13} &
  \textbf{0.185} &
  \underline{4.52} &
  \underline{0.200} &
  \underline{6.27} &
  \underline{0.204} &
  \textbf{11.1} &
  \textbf{0.170} &
  \underline{6.51} &
  \underline{0.193} &
  \underline{5.12} &
  \underline{0.177} \\
$\Delta$-KNN-RLog &
  \textbf{3.98} &
  \textbf{0.187} &
  \textbf{0.770} &
  \textbf{0.115} &
  \textbf{1.45} &
  \textbf{0.093} &
  \underline{7.14} &
  \underline{0.186} &
  \textbf{4.36} &
  \textbf{0.190} &
  \textbf{6.13} &
  \textbf{0.196} &
  \underline{11.3} &
  \underline{0.175} &
  \textbf{6.43} &
  \textbf{0.187} &
  \textbf{5.01} &
  \textbf{0.163} \\ \bottomrule
\end{tabular}%
\vspace{-0.3cm}
\end{table*}

\paragraph{Datasets. }
We evaluate our methods across vision and language domains, with a primary focus on the former. For vision tasks, we use ImageNet-1K (Im-1K) and a suite of covariate-shifted variants: 
1) ImageNet-Rendition (Im-R)~\citep{hendrycks2020many}, 
2) ImageNet-A (Im-A)~\citep{hendrycks2021natural}, 
3) ObjectNet (ON)~\citep{barbu2019objectnet}, 
4) ImageNetV2 (Im-V2)~\citep{recht2019imagenet}, 
5) ImageNet-Sketch (Im-S)~\citep{wang2019learning}, and 
6) ImageNet-C (Im-C)~\citep{hendrycks2019benchmarking}. We group these datasets based on label coverage: full 1000-class coverage (Im-1K, Im-V2, Im-S, Im-C) and subsets of classes (Im-R, Im-A, ON). For language tasks, we evaluate on the Amazon Reviews dataset~\citep{ni2019justifying, koh2021wilds}. To simulate realistic deployment scenarios involving distribution shift, following \citet{liang2024selective} we evaluate on mixed test sets that combine in-distribution and covariate-shifted samples. For example, results reported on Im-C are computed on a combined test set of Im-1K and Im-C.

\paragraph{Classifiers and Baseline Selector Scores. } 
We consider two families of classifiers for vision experiments, namely CLIP zero-shot VLMs~\citep{radford2021learning} and supervised classifiers. Specifically, we use the CLIP model from Data Filtering Networks (DFN)~\citep{fang2024data} and EVA~\citep{fang2023eva} for supervised learning, chosen for their state-of-the-art accuracy on ImageNet. Our focus is not on model training but on evaluating selector scores applied post-hoc. Note that for EVA, we restrict evaluation to datasets with full 1K class coverage as the model is trained on the complete ImageNet label set only. In contrast, CLIP can be adapted at inference time to arbitrary label subsets, so we evaluate it across all datasets. For language tasks, we fine-tune a DistilBERT~\citep{sanh2019distilbert} model using LISA~\citep{yao2022improving} on the Amazon Reviews training set and evaluate selective classification performance on the full test set.

For baseline scores, we compare our proposed $\Delta$-MDS, $\Delta$-KNN, and their linear combinations with common OOD detection and uncertainty-based scores: 
MSP~\citep{hendrycks2017a}, MCM (for CLIP), MaxLogit~\citep{hendrycks2019scaling}, Energy~\citep{liu2020energy}, MDS~\citep{lee2018simple}, KNN~\citep{sun2022out}, and RLog~\citep{liang2024selective} as well as SIRC~\citep{xia2022augmenting}. As they are functionally similar, we abbreviate MCM as MSP when presenting CLIP results. Details of the baseline are provided in Appendix~\ref{app:baselines}.

\paragraph{Evaluation Metrics. } We evaluate performance using two metrics: the Area Under the Risk-Coverage Curve (AURC) and the Normalized AURC (NAURC)~\citep{cattelan2023fix}. The AURC captures the joint performance of the classifier and selector across coverage levels. NAURC normalizes AURC to account for the classifier's base error rate, providing a fairer comparison across models with different accuracies. Formally, NAURC is defined as:
\begin{equation}
    \text{NAURC}(f, g) = \frac{\text{AURC}(f, g) - \text{AURC}(f, g^*)}{R(f) - \text{AURC}(f, g^*)},
\end{equation}
where $g^*$ denotes an oracle confidence function achieving optimal AURC, and $R(f)$ is the risk of $f$. The oracle can be computed in practice using the ground-truth labels of the evaluation set. Intuitively, NAURC measures how close the selector $g$ gets to the optimal, normalized by the classifier's total error. Thus, while AURC is useful for understanding overall performance in the context of a specific model, NAURC enables fair selector comparisons across models by factoring out baseline classifier accuracy.

\begin{wraptable}[18]{r}{0.34\textwidth}
  \vspace{-0.5cm}
  \centering
  \setlength{\tabcolsep}{2.5pt}
  \renewcommand{\arraystretch}{1.1}
  \scriptsize
  \caption{Results on Amazon Reviews and its covariate shifted test set at full coverage using DistilBERT trained with LISA.}
  \label{tab:amazon}
  \begin{tabular}{@{}ccccc@{}}
    \toprule
     &
      \multicolumn{2}{c}{\textbf{In-D}} &
      \multicolumn{2}{c}{\textbf{Cov Shift}} \\ 
    \cmidrule(lr){2-3}
    \cmidrule(lr){4-5}
    Method & A & N & A & N \\ 
    \midrule
MSP           & 12.2          & 0.368          & 13.9             & 0.401             \\
MaxLogit      & 12.6          & 0.384          & 14.3             & 0.416             \\
Energy        & 12.89         & 0.397          & 14.6             & 0.428             \\
MDS           & 20.6          & 0.739          & 22.2             & 0.750             \\
KNN           & 19.4          & 0.686          & 21.3             & 0.711             \\
RLog           & 12.4          & 0.376          & 14.1             & 0.410             \\
SIRC           & 12.3          & 0.370          & 14.0             & 0.403             \\
$\Delta$-MDS     & 12.7          & 0.389          & 14.4             & 0.422             \\
$\Delta$-KNN     & 12.4          & 0.374          & 14.2             & 0.412             \\
$\Delta$-MDS-RLog & 12.2          & 0.368          & 13.9             & 0.401             \\
$\Delta$-KNN-RLog & \underline{12.0}    & \underline{0.358}    & \underline{13.8}       & \underline{0.394}       \\
$\Delta$-MDS-MSP & \textbf{11.9} & \textbf{0.354} & \textbf{13.6}    & \textbf{0.387}    \\
$\Delta$-KNN-MSP & \underline{12.0}    & 0.359          & \underline{13.8}       & 0.396             \\ 
\bottomrule
  \end{tabular}
\end{wraptable}

\paragraph{Selecting $\lambda$ and $k$.} Both $\lambda$ and $k$ can be selected on a validation set. We found that the simplest recipe to fitting $\lambda$ is to balance the magnitudes of $s_1(\rvx)$ and $s_2(\rvx)$, so that neither overpowers the other. For $k$ in KNN-based scores, we find that $k\in[25,50]$ is a sweet spot. Full experimental settings are provided in Appendix~\ref{app:exp_details}, and hyperparameter sensitivity analysis for $\lambda$ and $k$ is presented in appendix Fig.~\ref{fig:hyperparam_sensitivity}.
\vspace{-0.2cm}
\subsection{Image Experiments}
\begin{table}
  \centering
  \setlength{\tabcolsep}{2.5pt}
  \renewcommand{\arraystretch}{1.1}
  \scriptsize
  \caption{Supervised learning AURC (A) and NAURC (N) results with the EVA model at full coverage. Lower is better. AURC results are on the $10^{-2}$ scale. \textbf{Bold} and \underline{underline} denotes the best and second best result respectively.}
  \label{tab:eva_sc}
\begin{tabular}{@{}ccccccccccc@{}}
\toprule
 &
  \multicolumn{2}{c}{\textbf{Im-1K}} &
  \multicolumn{2}{c}{\textbf{Im-V2}} &
  \multicolumn{2}{c}{\textbf{Im-S}} &
  \multicolumn{2}{c}{\textbf{Im-C}} &
  \multicolumn{2}{c}{\textbf{Avg (1K)}} \\ 
\cmidrule(lr){2-3}
\cmidrule(lr){4-5}
\cmidrule(lr){6-7}
\cmidrule(lr){8-9}
\cmidrule(lr){10-11}
 Method         & A    & N     & A    & N     & A    & N     & A    & N     & A    & N       \\ \midrule
MSP       & 3.32 & 0.256 & 3.85 & 0.266 & 8.15 & 0.319 & 6.41 & 0.215 & 5.43 & 0.264 \\
MaxLogit  & 4.53 & 0.371 & 5.16 & 0.379 & 10.3 & 0.437 & 7.98 & 0.301 & 6.99 & 0.372 \\
Energy    & 6.82 & 0.590 & 7.58 & 0.589 & 14.0 & 0.641 & 11.1 & 0.474 & 9.89 & 0.573 \\
MDS       & 4.01 & 0.322 & 4.32 & 0.307 & 7.26 & 0.271 & 6.80 & 0.236 & 5.60 & 0.284 \\
KNN       & 4.00 & 0.321 & 4.31 & 0.306 & 7.15 & 0.265 & 6.77 & 0.234 & 5.56 & 0.282 \\
RLog       & 2.33 & 0.161 & 2.72 & 0.168 & 5.90 & 0.197 & 5.50 & 0.163 & 4.11 & 0.172 \\
SIRC      & 3.68 & 0.290 & 4.23 & 0.299 & 8.71 & 0.350 & 6.84 & 0.240 & 5.87 & 0.295 \\
$\Delta$-MDS & 2.56 & 0.183 & 2.90 & 0.183 & 5.76 & 0.189 & 5.50 & 0.164 & 4.18 & 0.180 \\
$\Delta$-KNN & 2.60 & 0.187 & 2.99 & 0.191 & 5.91 & 0.197 & 5.74 & 0.177 & 4.31 & 0.188 \\
$\Delta$-MDS-RLog &
  \textbf{2.26} &
  \textbf{0.155} &
  \textbf{2.61} &
  \textbf{0.158} &
  \textbf{5.45} &
  \textbf{0.172} &
  \textbf{5.12} &
  \textbf{0.143} &
  \textbf{3.86} &
  \textbf{0.157} \\
$\Delta$-KNN-RLog &
  \underline{2.31} &
  \underline{0.159} &
  \underline{2.69} &
  \underline{0.165} &
  \underline{5.63} &
  \underline{0.182} &
  \underline{5.38} &
  \underline{0.157} &
  \underline{4.00} &
  \underline{0.166} \\ \bottomrule
\end{tabular}
\vspace{-0.3cm}
\end{table}

We report full selective classification results for CLIP and EVA models in Table~\ref{tab:clip_sc} and Table~\ref{tab:eva_sc}, respectively. First, let us consider CLIP results. We see that going from MDS and KNN to their NP-informed variants, $\Delta$-MDS and $\Delta$-KNN, leads to roughly $50\%$ reduction in average AURC and NAURC, showing that the assumptions made in the NP-optimality theory hold well in practice. The best average performance is achieved by the linear combinations $\Delta$-KNN-RLog and $\Delta$-MDS-RLog, with $\Delta$-KNN-RLog leading overall in both AURC and NAURC. RLog score ranks third on average, highlighting its strength as a standalone logit-based selector. Motivated by this strong performance, we use RLog in combination with our distance-based scores. We plot the risk-coverage curves for selected datasets in Fig.~\ref{fig:rc_curves} of the appendix, showing our methods consistently demonstrate the most favorable trade-off across all coverage levels, remaining stable even at low coverage. 

For practitioners aiming to identify the best overall selective classification setup that considers both the base classifier and the selector, one approach is to compare performance using the AURC metric. On the 1K-class datasets, EVA paired with $\Delta$-MDS-RLog achieves an AURC of 3.86, outperforming the DFN CLIP model with $\Delta$-KNN-RLog at 5.01. Despite similar NAURC values, EVA's higher Im-1K base accuracy ($84.33\%$ vs. DFN CLIP's $80.39\%$) makes it the preferred choice when considering both components. Intuitively, the optimal setup involves pairing the best selector (here, $\Delta$-MDS-RLog) with the most accurate base classifier.

For EVA, the ranking is reversed: $\Delta$-MDS-RLog achieves the best overall performance, followed by $\Delta$-KNN-RLog. This supports our hypothesis that MDS-based methods are particularly effective for supervised models due to the close connection between softmax classifiers and Gaussian Discriminant Analysis~\citep{lee2018simple}, which justifies the Gaussian assumptions used in MDS. In contrast, CLIP models trained with contrastive learning~\citep{radford2021learning} do not satisfy these assumptions, making the nonparametric $\Delta$-KNN combination more suitable. The bottom row of Fig.~\ref{fig:rc_curves} of the appendix confirms that $\Delta$-MDS-RLog yields the best risk-coverage behavior for supervised learning across all coverage levels. Comparing average NAURC on the full 1K-class datasets, $\Delta$-MDS-RLog with EVA is the top performing selector score, with a slightly better score (0.157) than the best performer on CLIP (0.163).

To verify that the performance gains of our methods stem from actual algorithmic improvements, rather than large-scale pretraining of CLIP or EVA potentially mitigating distribution shifts, we also evaluate on ResNet50 trained solely on ImageNet-1K. The results in appendix Table~\ref{tab:resnet} show that our methods perform the best, consistent with earlier findings.
\vspace{-0.3cm}
\paragraph{Semantic Shift Experiments. } For completeness, we also report results for experiments on datasets that are semantic shifts to ImageNet-1K in Appendix Table~\ref{tab:semantic_shift}. In agreement with the covariate shift experiments, our proposed methods achieve the best performance on this benchmark.
\vspace{-0.2cm}
\subsection{Language Experiments}
Table~\ref{tab:amazon} presents results on the Amazon Reviews dataset. Unlike the vision tasks, the best-performing method is $\Delta$-MDS-MSP, followed closely by $\Delta$-MDS-RLog and $\Delta$-KNN-MSP. Since LISA~\citep{yao2022improving} uses a softmax classification objective, the superiority of MDS-based selectors supports our hypothesis about their suitability for supervised models. Interestingly, MSP outperforms RLog in this domain, resulting in better performance when combined with $\Delta$-MDS. This highlights another important practical insight that the best linear combination often involves pairing the top-performing standalone distance-based and logit-based score.

\subsection{Ablations}\label{sec:ablations}

\begin{table*}
  \centering
  \setlength{\tabcolsep}{2.7pt}
  \renewcommand{\arraystretch}{1.1}
  \scriptsize
  \caption{Ablation experiments on DFN CLIP.}
  \label{tab:ablations}
\begin{tabular}{@{}ccccccccccccccccccc@{}}
\toprule
 &
  \multicolumn{2}{c}{\textbf{Im-1K}} &
  \multicolumn{2}{c}{\textbf{Im-R}} &
  \multicolumn{2}{c}{\textbf{Im-A}} &
  \multicolumn{2}{c}{\textbf{ON}} &
  \multicolumn{2}{c}{\textbf{Im-V2}} &
  \multicolumn{2}{c}{\textbf{Im-S}} &
  \multicolumn{2}{c}{\textbf{Im-C}} &
  \multicolumn{2}{c}{\textbf{Avg (1K)}} &
  \multicolumn{2}{c}{\textbf{Avg (all)}} \\ 
  \cmidrule(lr){2-3}
\cmidrule(lr){4-5}
\cmidrule(lr){6-7}
\cmidrule(lr){8-9}
\cmidrule(lr){10-11}
\cmidrule(lr){12-13}
\cmidrule(lr){14-15}
\cmidrule(lr){16-17}
\cmidrule(lr){18-19}
  Method        & A    & N     & A & N     & A    & N     & A    & N     & A    & N     & A    & N     & A    & N     & A    & N     & A    & N     \\ \midrule
\multicolumn{19}{c}{\cellcolor[HTML]{EFEFEF}Ablations on $\Delta$-KNN}                                                                         \\ \midrule
$\Delta$-KNN no avg &
  4.66 &
  0.234 &
  \textbf{1.40} &
  \textbf{0.234} &
  2.16 &
  0.167 &
  8.87 &
  0.268 &
  5.11 &
  0.239 &
  7.63 &
  0.276 &
  12.4 &
  0.216 &
  7.45 &
  0.241 &
  6.03 &
  0.233 \\
$\Delta$-KNN w/ avg &
  \textbf{4.60} &
  \textbf{0.230} &
  1.42 &
  0.237 &
  \textbf{1.99} &
  \textbf{0.149} &
  \textbf{8.52} &
  \textbf{0.252} &
  \textbf{4.99} &
  \textbf{0.231} &
  \textbf{7.55} &
  \textbf{0.272} &
  \textbf{12.1} &
  \textbf{0.207} &
  \textbf{7.32} &
  \textbf{0.235} &
  \textbf{5.89} &
  \textbf{0.225} \\ \midrule
\multicolumn{19}{c}{\cellcolor[HTML]{EFEFEF}Ablations on linear combinations} \\ \midrule
$\Delta$-MDS-$\Delta$-KNN &
  4.68 &
  0.235 &
  1.98 &
  0.237 &
  2.26 &
  0.149 &
  9.06 &
  0.252 &
  5.11 &
  0.231 &
  7.90 &
  0.272 &
  12.2 &
  0.207 &
  7.46 &
  0.235 &
  6.16 &
  0.225 \\
MSP-RLog &
  4.82 &
  0.245 &
  0.800 &
  0.120 &
  1.56 &
  0.105 &
  7.51 &
  0.204 &
  5.26 &
  0.249 &
  6.81 &
  0.232 &
  12.5 &
  0.222 &
  7.35 &
  0.237 &
  5.61 &
  0.197 \\
$\Delta$-KNN-MSP &
  4.57 &
  0.228 &
  1.22 &
  0.199 &
  1.82 &
  0.131 &
  7.58 &
  0.207 &
  4.94 &
  0.228 &
  7.40 &
  0.264 &
  11.8 &
  0.210 &
  7.18 &
  0.244 &
  5.62 &
  0.253 \\
$\Delta$-KNN-RLog &
  \textbf{3.98} &
  \textbf{0.187} &
  \textbf{0.770} &
  \textbf{0.115} &
  \textbf{1.49} &
  \textbf{0.093} &
  \textbf{7.14} &
  \textbf{0.186} &
  \textbf{4.36} &
  \textbf{0.190} &
  \textbf{6.13} &
  \textbf{0.196} &
  \textbf{11.3} &
  \textbf{0.175} &
  \textbf{6.43} &
  \textbf{0.187} &
  \textbf{5.01} &
  \textbf{0.163} \\ \bottomrule
\end{tabular}%
\end{table*}

\begin{table}[]
\centering
\setlength{\tabcolsep}{2.5pt}
\renewcommand{\arraystretch}{1.1}
\small
\caption{Ablation results using DFN-CLIP on ImageNet-1K where the fraction of labeled samples used in feature computation for our proposed methods are varied.}
\label{tab:sample_efficiency}
\begin{tabular}{@{}ccccccccccc@{}}
\toprule
& \multicolumn{2}{c}{\textbf{0.1\%}} & \multicolumn{2}{c}{\textbf{1\%}} & \multicolumn{2}{c}{\textbf{10\%}} & \multicolumn{2}{c}{\textbf{50\%}} & \multicolumn{2}{c}{\textbf{100\%}} \\ \cmidrule(lr){2-3}
\cmidrule(lr){4-5}
\cmidrule(lr){6-7}
\cmidrule(lr){8-9}
\cmidrule(lr){10-11}
Method         & A    & N     & A    & N     & A    & N     & A    & N     & A    & N     \\ \midrule
$\Delta$-MDS-RLog & -    & -     & 10.5 & 0.638 & 4.19 & 0.202 & 4.14 & 0.198 & 4.13 & 0.197 \\
$\Delta$-KNN-RLog & 4.81 & 0.245 & 4.58 & 0.229 & 4.17 & 0.200 & 3.98 & 0.188 & 3.98 & 0.187 \\ \bottomrule
\end{tabular}%
\end{table}

\paragraph{Design Choices.} Table~\ref{tab:ablations} summarizes several ablation experiments on the design choices of our proposed methods. First, we justify averaging the top-$k$ nearest neighbor distances in $\Delta$-KNN rather than using the $k$-th distance alone. This modification yields measurable gains, where average AURC improves from 6.03 to 5.89 and NAURC improves from 0.233 to 0.225. We also investigate various combinations of selector scores. For CLIP, $\Delta$-KNN-RLog remains the best across all configurations, outperforming both double-distance combinations (e.g., $\Delta$-MDS-$\Delta$-KNN) and double-logit combinations (e.g., MSP-RLog). Notably, pairing $\Delta$-KNN with RLog significantly outperforms pairing it with MSP, further validating RLog’s role as a strong logit-based complement.

\paragraph{Sample Efficiency.} Although our methods require a one-time feature computation step, this cost is amortized over all future inference runs as the resulting features can be cached. Nevertheless, to evaluate performance in low-data or low-computation resource regimes, we conducted ablations limiting the amount of labeled samples used. The results in Table~\ref{tab:sample_efficiency} show that both methods are surprisingly stable. $
\Delta$-KNN is especially robust, maintaining strong performance with as little as 0.1\% of labeled data. As expected, $\Delta$-MDS degrades at the 1\% level due to the difficulty of estimating per-class statistics with so few samples, and is not applicable at 0.1\% (roughly 1 image per class). Importantly, $\Delta$-KNN-RLog continues to outperform RLog at 1\% and matches it at 0.1\% (see Table~\ref{tab:clip_sc}), indicating that our method is still preferable whenever even a small amount of labeled data is accessible. 

\vspace{-0.2cm}
\section{Conclusion}
\vspace{-0.2cm}
We presented a framework for designing selector functions for selective classification, grounded in the Neyman--Pearson lemma. This reveals that the optimal selection score is a monotonic transformation of a likelihood ratio, which unifies several existing methods. We proposed two novel distance-based scores and their linear combinations with logit-based baselines. Experiments across vision and language demonstrate that our methods achieve state-of-the-art performance across diverse settings.

\paragraph{Limitations and Future Work.} 
While our focus has been on classification, the Neyman--Pearson framework is general and broadly applicable to other predictive tasks. Exploring selective prediction in settings where uncertainty plays a critical role, such as semantic segmentation and time series forecasting, presents promising future directions. Additionally, extending these ideas to generative models such as LLMs is another exciting avenue for future work. 

\newpage
\section*{Acknowledgements}
This research / project is supported by A*STAR under its National Robotics Programme (NRP) (Award M23NBK0053). The authors also acknowledge support from Google (Google South Asia and South-East Asia Award). 
\section*{Ethics Statement}
This work does not involve human subjects, and it relies solely on publicly available models and datasets with all attributions provided. Based on the scope of our methods and results, we do not identify ethical issues that require special attention, and we are not aware of immediate harmful applications arising from this research.

\section*{Reproducibility Statement}
We include the theoretical and experimental details necessary to reproduce our findings. Proofs and supporting discussions for all theoretical claims appear in appendix Sec.~\ref{app:proofs} and Sec.~\ref{app:logmean}. For implementation clarity, we provide pseudocode for our methods in appendix Sec.~\ref{app:pseudocode} and report experimental setup and hyperparameters in appendix Sec.~\ref{app:exp_details}. 

\section*{LLM Usage}
We used large language models to assist in formulating and checking mathematical proofs and to improve grammar and writing clarity. All outputs from the LLMs were reviewed by the authors before inclusion, and the authors take full responsibility for the paper’s content and the accuracy of the presented results.

\bibliography{ref}

@article{geifman2017selective,
  title={Selective classification for deep neural networks},
  author={Geifman, Yonatan and El-Yaniv, Ran},
  journal={Advances in neural information processing systems},
  volume={30},
  year={2017}
}

@article{neyman1933ix,
  title={IX. On the problem of the most efficient tests of statistical hypotheses},
  author={Neyman, Jerzy and Pearson, Egon Sharpe},
  journal={Philosophical Transactions of the Royal Society of London. Series A, Containing Papers of a Mathematical or Physical Character},
  volume={231},
  number={694-706},
  pages={289--337},
  year={1933},
  publisher={The Royal Society London}
}

@book{lehmann1986testing,
  title={Testing statistical hypotheses},
  author={Lehmann, Erich Leo and Romano, Joseph P and others},
  volume={3},
  year={1986},
  publisher={Springer}
}

@inproceedings{
hendrycks2017a,
title={A Baseline for Detecting Misclassified and Out-of-Distribution Examples in Neural Networks},
author={Dan Hendrycks and Kevin Gimpel},
booktitle={International Conference on Learning Representations},
year={2017},
url={https://openreview.net/forum?id=Hkg4TI9xl}
}

@article{
    liang2024selective,
    title={Selective Classification Under Distribution Shifts},
    author={Hengyue Liang and Le Peng and Ju Sun},
    journal={Transactions on Machine Learning Research},
    issn={2835-8856},
    year={2024},
    url={https://openreview.net/forum?id=dmxMGW6J7N},
    note={}
}

@inproceedings{guo2017calibration,
  title={On calibration of modern neural networks},
  author={Guo, Chuan and Pleiss, Geoff and Sun, Yu and Weinberger, Kilian Q},
  booktitle={International conference on machine learning},
  pages={1321--1330},
  year={2017},
  organization={PMLR}
}

@article{lee2018simple,
  title={A simple unified framework for detecting out-of-distribution samples and adversarial attacks},
  author={Lee, Kimin and Lee, Kibok and Lee, Honglak and Shin, Jinwoo},
  journal={Advances in neural information processing systems},
  volume={31},
  year={2018}
}

@inproceedings{sun2022out,
  title={Out-of-distribution detection with deep nearest neighbors},
  author={Sun, Yiyou and Ming, Yifei and Zhu, Xiaojin and Li, Yixuan},
  booktitle={International Conference on Machine Learning},
  pages={20827--20840},
  year={2022},
  organization={PMLR}
}

@inproceedings{hendrycks2020many,
  title={The many faces of robustness: A critical analysis of out-of-distribution generalization. 2021 IEEE},
  author={Hendrycks, Dan and Basart, Steven and Mu, Norman and Kadavath, Saurav and Wang, Frank and Dorundo, Evan and Desai, Rahul and Zhu, Tyler Lixuan and Parajuli, Samyak and Guo, Mike and others},
  booktitle={CVF International Conference on Computer Vision (ICCV)},
  volume={2},
  year={2020}
}

@inproceedings{hendrycks2021natural,
  title={Natural adversarial examples},
  author={Hendrycks, Dan and Zhao, Kevin and Basart, Steven and Steinhardt, Jacob and Song, Dawn},
  booktitle={Proceedings of the IEEE/CVF conference on computer vision and pattern recognition},
  pages={15262--15271},
  year={2021}
}

@article{barbu2019objectnet,
  title={Objectnet: A large-scale bias-controlled dataset for pushing the limits of object recognition models},
  author={Barbu, Andrei and Mayo, David and Alverio, Julian and Luo, William and Wang, Christopher and Gutfreund, Dan and Tenenbaum, Josh and Katz, Boris},
  journal={Advances in neural information processing systems},
  volume={32},
  year={2019}
}

@inproceedings{recht2019imagenet,
  title={Do imagenet classifiers generalize to imagenet?},
  author={Recht, Benjamin and Roelofs, Rebecca and Schmidt, Ludwig and Shankar, Vaishaal},
  booktitle={International conference on machine learning},
  pages={5389--5400},
  year={2019},
  organization={PMLR}
}

@article{wang2019learning,
  title={Learning robust global representations by penalizing local predictive power},
  author={Wang, Haohan and Ge, Songwei and Lipton, Zachary and Xing, Eric P},
  journal={Advances in Neural Information Processing Systems},
  volume={32},
  year={2019}
}

@article{hendrycks2019benchmarking,
  title={Benchmarking neural network robustness to common corruptions and perturbations},
  author={Hendrycks, Dan and Dietterich, Thomas},
  journal={arXiv preprint arXiv:1903.12261},
  year={2019}
}

@inproceedings{ni2019justifying,
  title={Justifying recommendations using distantly-labeled reviews and fine-grained aspects},
  author={Ni, Jianmo and Li, Jiacheng and McAuley, Julian},
  booktitle={Proceedings of the 2019 conference on empirical methods in natural language processing and the 9th international joint conference on natural language processing (EMNLP-IJCNLP)},
  pages={188--197},
  year={2019}
}

@inproceedings{koh2021wilds,
  title={Wilds: A benchmark of in-the-wild distribution shifts},
  author={Koh, Pang Wei and Sagawa, Shiori and Marklund, Henrik and Xie, Sang Michael and Zhang, Marvin and Balsubramani, Akshay and Hu, Weihua and Yasunaga, Michihiro and Phillips, Richard Lanas and Gao, Irena and others},
  booktitle={International conference on machine learning},
  pages={5637--5664},
  year={2021},
  organization={PMLR}
}

@article{cattelan2023fix,
  title={How to fix a broken confidence estimator: Evaluating post-hoc methods for selective classification with deep neural networks},
  author={Cattelan, Lu{\'\i}s Felipe P and Silva, Danilo},
  journal={arXiv preprint arXiv:2305.15508},
  year={2023}
}

@inproceedings{liu2021swin,
  title={Swin transformer: Hierarchical vision transformer using shifted windows},
  author={Liu, Ze and Lin, Yutong and Cao, Yue and Hu, Han and Wei, Yixuan and Zhang, Zheng and Lin, Stephen and Guo, Baining},
  booktitle={Proceedings of the IEEE/CVF international conference on computer vision},
  pages={10012--10022},
  year={2021}
}

@inproceedings{brock2021high,
  title={High-performance large-scale image recognition without normalization},
  author={Brock, Andy and De, Soham and Smith, Samuel L and Simonyan, Karen},
  booktitle={International conference on machine learning},
  pages={1059--1071},
  year={2021},
  organization={PMLR}
}

@article{loftsgaarden1965nonparametric,
  title={A nonparametric estimate of a multivariate density function},
  author={Loftsgaarden, Don O and Quesenberry, Charles P},
  journal={The Annals of Mathematical Statistics},
  volume={36},
  number={3},
  pages={1049--1051},
  year={1965},
  publisher={Institute of Mathematical Statistics}
}

@article{chow1970optimum,
  title={On optimum recognition error and reject tradeoff},
  author={Chow, C},
  journal={IEEE Transactions on information theory},
  volume={16},
  number={1},
  pages={41--46},
  year={1970},
  publisher={IEEE}
}

@article{bartlett2008classification,
  title={Classification with a Reject Option using a Hinge Loss.},
  author={Bartlett, Peter L and Wegkamp, Marten H},
  journal={Journal of Machine Learning Research},
  volume={9},
  number={8},
  year={2008}
}

@inproceedings{fumera2002support,
  title={Support vector machines with embedded reject option},
  author={Fumera, Giorgio and Roli, Fabio},
  booktitle={Pattern Recognition with Support Vector Machines: First International Workshop, SVM 2002 Niagara Falls, Canada, August 10, 2002 Proceedings},
  pages={68--82},
  year={2002},
  organization={Springer}
}

@article{hellman1970nearest,
  title={The nearest neighbor classification rule with a reject option},
  author={Hellman, Martin E},
  journal={IEEE Transactions on Systems Science and Cybernetics},
  volume={6},
  number={3},
  pages={179--185},
  year={1970},
  publisher={IEEE}
}

@article{lecun1989handwritten,
  title={Handwritten digit recognition with a back-propagation network},
  author={LeCun, Yann and Boser, Bernhard and Denker, John and Henderson, Donnie and Howard, Richard and Hubbard, Wayne and Jackel, Lawrence},
  journal={Advances in neural information processing systems},
  volume={2},
  year={1989}
}

@article{el2010foundations,
  title={On the Foundations of Noise-free Selective Classification.},
  author={El-Yaniv, Ran and others},
  journal={Journal of Machine Learning Research},
  volume={11},
  number={5},
  year={2010}
}

@inproceedings{gal2016dropout,
  title={Dropout as a bayesian approximation: Representing model uncertainty in deep learning},
  author={Gal, Yarin and Ghahramani, Zoubin},
  booktitle={international conference on machine learning},
  pages={1050--1059},
  year={2016},
  organization={PMLR}
}

@article{hendrycks2019scaling,
  title={Scaling out-of-distribution detection for real-world settings},
  author={Hendrycks, Dan and Basart, Steven and Mazeika, Mantas and Zou, Andy and Kwon, Joe and Mostajabi, Mohammadreza and Steinhardt, Jacob and Song, Dawn},
  journal={arXiv preprint arXiv:1911.11132},
  year={2019}
}

@inproceedings{
galil2023what,
title={What Can we Learn From The Selective Prediction And Uncertainty Estimation Performance Of 523 Imagenet Classifiers?},
author={Ido Galil and Mohammed Dabbah and Ran El-Yaniv},
booktitle={The Eleventh International Conference on Learning Representations },
year={2023},
url={https://openreview.net/forum?id=p66AzKi6Xim}
}

@inproceedings{geifman2019selectivenet,
  title={Selectivenet: A deep neural network with an integrated reject option},
  author={Geifman, Yonatan and El-Yaniv, Ran},
  booktitle={International conference on machine learning},
  pages={2151--2159},
  year={2019},
  organization={PMLR}
}

@article{liu2019deep,
  title={Deep gamblers: Learning to abstain with portfolio theory},
  author={Liu, Ziyin and Wang, Zhikang and Liang, Paul Pu and Salakhutdinov, Russ R and Morency, Louis-Philippe and Ueda, Masahito},
  journal={Advances in Neural Information Processing Systems},
  volume={32},
  year={2019}
}

@article{huang2020self,
  title={Self-adaptive training: beyond empirical risk minimization},
  author={Huang, Lang and Zhang, Chao and Zhang, Hongyang},
  journal={Advances in neural information processing systems},
  volume={33},
  pages={19365--19376},
  year={2020}
}

@inproceedings{yao2022improving,
  title={Improving out-of-distribution robustness via selective augmentation},
  author={Yao, Huaxiu and Wang, Yu and Li, Sai and Zhang, Linjun and Liang, Weixin and Zou, James and Finn, Chelsea},
  booktitle={International Conference on Machine Learning},
  pages={25407--25437},
  year={2022},
  organization={PMLR}
}

@inproceedings{
heng2025detecting,
title={Detecting Covariate Shifts With Vision-Language Foundation Models},
author={Alvin Heng and Harold Soh},
booktitle={ICLR 2025 Workshop on Foundation Models in the Wild},
year={2025},
url={https://openreview.net/forum?id=SOHJFKH8oc}
}

@article{ming2022delving,
  title={Delving into out-of-distribution detection with vision-language representations},
  author={Ming, Yifei and Cai, Ziyang and Gu, Jiuxiang and Sun, Yiyou and Li, Wei and Li, Yixuan},
  journal={Advances in neural information processing systems},
  volume={35},
  pages={35087--35102},
  year={2022}
}

@article{heng2024out,
  title={Out-of-distribution detection with a single unconditional diffusion model},
  author={Heng, Alvin and Soh, Harold and others},
  journal={Advances in Neural Information Processing Systems},
  volume={37},
  pages={43952--43974},
  year={2024}
}

@article{fisch2022calibrated,
    title={Calibrated Selective Classification},
    author={Adam Fisch and Tommi S. Jaakkola and Regina Barzilay},
    journal={Transactions on Machine Learning Research},
    issn={2835-8856},
    year={2022},
    url={https://openreview.net/forum?id=zFhNBs8GaV},
    note={}
}

@inproceedings{
    fang2024data,
    title={Data Filtering Networks},
    author={Alex Fang and Albin Madappally Jose and Amit Jain and Ludwig Schmidt and Alexander T Toshev and Vaishaal Shankar},
    booktitle={The Twelfth International Conference on Learning Representations},
    year={2024},
    url={https://openreview.net/forum?id=KAk6ngZ09F}
}

@inproceedings{fang2023eva,
  title={Eva: Exploring the limits of masked visual representation learning at scale},
  author={Fang, Yuxin and Wang, Wen and Xie, Binhui and Sun, Quan and Wu, Ledell and Wang, Xinggang and Huang, Tiejun and Wang, Xinlong and Cao, Yue},
  booktitle={Proceedings of the IEEE/CVF conference on computer vision and pattern recognition},
  pages={19358--19369},
  year={2023}
}

@article{sanh2019distilbert,
  title={DistilBERT, a distilled version of BERT: smaller, faster, cheaper and lighter},
  author={Sanh, Victor and Debut, Lysandre and Chaumond, Julien and Wolf, Thomas},
  journal={arXiv preprint arXiv:1910.01108},
  year={2019}
}

@article{liu2020energy,
  title={Energy-based out-of-distribution detection},
  author={Liu, Weitang and Wang, Xiaoyun and Owens, John and Li, Yixuan},
  journal={Advances in neural information processing systems},
  volume={33},
  pages={21464--21475},
  year={2020}
}

@inproceedings{radford2021learning,
  title={Learning transferable visual models from natural language supervision},
  author={Radford, Alec and Kim, Jong Wook and Hallacy, Chris and Ramesh, Aditya and Goh, Gabriel and Agarwal, Sandhini and Sastry, Girish and Askell, Amanda and Mishkin, Pamela and Clark, Jack and others},
  booktitle={International conference on machine learning},
  pages={8748--8763},
  year={2021},
  organization={PmLR}
}

@book{silverman2018density,
  title={Density estimation for statistics and data analysis},
  author={Silverman, Bernard W},
  year={2018},
  publisher={Routledge}
}

@article{zhao2022analysis,
  title={Analysis of knn density estimation},
  author={Zhao, Puning and Lai, Lifeng},
  journal={IEEE Transactions on Information Theory},
  volume={68},
  number={12},
  pages={7971--7995},
  year={2022},
  publisher={IEEE}
}

@inproceedings{cherti2023reproducible,
  title={Reproducible scaling laws for contrastive language-image learning},
  author={Cherti, Mehdi and Beaumont, Romain and Wightman, Ross and Wortsman, Mitchell and Ilharco, Gabriel and Gordon, Cade and Schuhmann, Christoph and Schmidt, Ludwig and Jitsev, Jenia},
  booktitle={Proceedings of the IEEE/CVF conference on computer vision and pattern recognition},
  pages={2818--2829},
  year={2023}
}

@book{vovk2005algorithmic,
  title={Algorithmic learning in a random world},
  author={Vovk, Vladimir and Gammerman, Alexander and Shafer, Glenn},
  volume={29},
  year={2005},
  publisher={Springer}
}

@article{bates2021distribution,
  title={Distribution-free, risk-controlling prediction sets},
  author={Bates, Stephen and Angelopoulos, Anastasios and Lei, Lihua and Malik, Jitendra and Jordan, Michael},
  journal={Journal of the ACM (JACM)},
  volume={68},
  number={6},
  pages={1--34},
  year={2021},
  publisher={ACM New York, NY}
}

@inproceedings{
    angelopoulos2024conformal,
    title={Conformal Risk Control},
    author={Anastasios Nikolas Angelopoulos and Stephen Bates and Adam Fisch and Lihua Lei and Tal Schuster},
    booktitle={The Twelfth International Conference on Learning Representations},
    year={2024},
    url={https://openreview.net/forum?id=33XGfHLtZg}
}

@article{angelopoulos2021gentle,
  title={A gentle introduction to conformal prediction and distribution-free uncertainty quantification},
  author={Angelopoulos, Anastasios N and Bates, Stephen},
  journal={arXiv preprint arXiv:2107.07511},
  year={2021}
}

@inproceedings{xia2022augmenting,
  title={Augmenting softmax information for selective classification with out-of-distribution data},
  author={Xia, Guoxuan and Bouganis, Christos-Savvas},
  booktitle={Proceedings of the Asian Conference on Computer Vision},
  pages={1995--2012},
  year={2022}
}

@inproceedings{
narasimhan2024plugin,
title={Plugin estimators for selective classification with out-of-distribution detection},
author={Harikrishna Narasimhan and Aditya Krishna Menon and Wittawat Jitkrittum and Sanjiv Kumar},
booktitle={The Twelfth International Conference on Learning Representations},
year={2024},
url={https://openreview.net/forum?id=DASh78rJ7g}
}

@inproceedings{wang2022vim,
  title={Vim: Out-of-distribution with virtual-logit matching},
  author={Wang, Haoqi and Li, Zhizhong and Feng, Litong and Zhang, Wayne},
  booktitle={Proceedings of the IEEE/CVF conference on computer vision and pattern recognition},
  pages={4921--4930},
  year={2022}
}

@article{jiang2018trust,
  title={To trust or not to trust a classifier},
  author={Jiang, Heinrich and Kim, Been and Guan, Melody and Gupta, Maya},
  journal={Advances in neural information processing systems},
  volume={31},
  year={2018}
}
\bibliographystyle{iclr2026_conference}

\newpage
\appendix
\begin{center} \Large \textbf{Appendix for ``Know When to Abstain: Optimal Selective Classification with Likelihood Ratios''} \end{center}

\section{Description of Baselines}\label{app:baselines}
In this section we provide a brief description of each baseline considered in this work.

\paragraph{Maximum Softmax Probability (MSP)~\citep{hendrycks2017a}.}
Given an input $\rvx$, let the classifier output logits be $\{l^{(k)}\}_{k=1}^K$ and corresponding softmax
probabilities $p_\theta(y=k|\rvx)=\mathrm{softmax}(l^{(k)})$. The MSP score is defined as
\[
s_{\mathrm{MSP}}(\rvx) =\max_{k\in\{1,\dots,K\}} p_\theta(y=k|\rvx).
\]
MSP is commonly used as an OOD/confidence score: larger values indicate the model places high
probability mass on a single class, and thus the input is treated as more ``in-distribution'' or
``confident''. For selective classification, we threshold this score (and subsequent scores) to form the selector
$g_{s,\gamma}(\rvx)=\mathbbm{1}[s_{\mathrm{MSP}}(\rvx)>\gamma]$.

\paragraph{Maximum Logit (MaxLogit)~\citep{hendrycks2019scaling}.} The MaxLogit score is defined as
\[
s_{\mathrm{MaxLogit}}(\rvx) \;=\; \max_{k\in\{1,\dots,K\}} l^{(k)}
\]
Intuitively, larger values indicate that at least one class is assigned a large unnormalized score,
and thus the model is more confident in its prediction.

\paragraph{Energy~\citep{liu2020energy}.} The energy score is defined from logits as the Helmholtz free energy
\[
E(\rvx) \;=\; -T \log \sum_{k=1}^K \exp\left(l^{(k)}/T\right),
\]
where $T>0$ is a temperature parameter (we set $T=1$). For selective classification we use the
negative energy (so that larger values indicate more
in-distribution-like inputs):
\[
s_{\mathrm{Energy}}(\rvx) \;=\; \log \sum_{k=1}^K \exp\left(l^{(k)}\right).
\]

\paragraph{Mahalanobis Distance (MDS)~\citep{lee2018simple}.} Let $\rvz=\phi(\rvx)$ denote the feature representation of $\rvx$ (e.g., penultimate features of a deep network).
Using training data, we estimate the empirical mean feature $\mu_i$ for each class $i\in\{1,\dots,K\}$ and a shared (tied) covariance matrix $\Sigma$ across classes.
The MDS score is then
\[
s_{\mathrm{MDS}}(\rvx)=\max_{i\in\{1,\dots,K\}} - (\rvz-\mu_i)^\top \Sigma^{-1}(\rvz-\mu_i),
\]
i.e., the negative squared Mahalanobis distance to the closest class centroid.
Intuitively, $s_{\mathrm{MDS}}(\rvx)$ is large when $\rvx$ lies in a high-density region of some class under the fitted Gaussian discriminant model, and small when $\rvx$ is far from all class clusters.
\paragraph{$k$-Nearest Neighbors (KNN)~\citep{sun2022out}.} Let $\{\rvz_j\}_{j=1}^n$ denote the set of training features (we normalize the features).
Define $r_k(\rvz)$ as the Euclidean distance from $\rvz$ to its $k$-th nearest neighbor among $\{\rvz_j\}$.
The KNN score is
\[
s_{\mathrm{KNN}}(\rvx) = -\, r_k(\rvz),
\]
so that inputs lying in denser regions of the training feature manifold (smaller neighbor distances) receive higher scores.

\paragraph{Raw Logits (RLog)~\citep{liang2024selective}.} RLog uses the confidence margin between the top two logits as a scale-robust confidence score. Let $l^{(1)} \ge l^{(2)} \ge \cdots \ge l^{(k)}$ be the logits sorted in descending order for a given input $\rvx$.
We define
\[
s_{\mathrm{RLog}}(\rvx) \;=\; l^{(1)} - l^{(2)}
\]
Intuitively, $s_{\mathrm{RLog}}(\rvx)$ is large when there is a clear ``winner'' class, and small when the classifier is uncertain between competing labels.
\citet{liang2024selective} argue that using a logit-space margin yields a more robust score under classifier miscalibration and post-hoc logit transformations (e.g., temperature scaling), since it depends only on the relative separation between the top classes.

\paragraph{Softmax Information Retaining Combination (SIRC)~\citep{xia2022augmenting}.} Let $S_1(\rvx)$ be a primary softmax-derived confidence score, and let $S_2(\rvx)$ be an auxiliary feature-based score. We choose $S_1$ to be MSP and $S_2$ to be the $L_1$ norm, in line with~\citet{xia2022augmenting}.
SIRC combines these via
\[
s_{\mathrm{SIRC}}(\rvx)
\;=\;
-\bigl(S_1^{\max}-S_1(\rvx)\bigr)\Bigl(1+\exp\bigl(-b\,[S_2(\rvx)-a]\bigr)\Bigr),
\]
where $S_1^{\max}$ is the maximum attainable value of $S_1$ (e.g., $S_1^{\max}=1$ for MSP), and $a,b$ control how strongly $S_2$ influences the score.
We follow \citet{xia2022augmenting} and set $a$ and $b$ using in-distribution statistics of $S_2$ ($a=\mu_{S_2}-3\sigma_{S_2}$ and $b=1/\sigma_{S_2}$).

\section{Proofs}
\label{app:proofs}
\msprl*

\begin{proof}
Recall that we denote $l^{(1)} \ge \cdots \ge l^{(K)}$ as the logits predicted by the classifier for a
given input $\rvx$ (sorted in descending order) and $d^{(i)}=\mathrm{softmax}(l^{(i)})$ the corresponding
softmax probabilities. Then $s_{\mathrm{MSP}}(\rvx)=d^{(1)}(\rvx)$ and $s_{\mathrm{RLog}}(\rvx)=l^{(1)}(\rvx)-l^{(2)}(\rvx)$.

Define
\[
p_c(\rvx) := p(\rvx\mid C)\quad\textrm{and}\quad p_w(\rvx) := p(\rvx\mid \neg C),
\]
i.e., the (test-time) input densities conditioned on the classifier being correct or wrong, respectively.
Let $\pi := P(C)$ and define the posterior correctness probability $q(\rvx):=P(C\mid X=\rvx)$.

\paragraph{MSP Optimality.}
By Bayes' rule,
\begin{align*}
q(\rvx)
&= \frac{p(\rvx\mid C)\,P(C)}{p(\rvx\mid C)\,P(C) + p(\rvx\mid \neg C)\,P(\neg C)}\\
&= \frac{p_c(\rvx)\,\pi}{p_c(\rvx)\,\pi + p_w(\rvx)\,(1-\pi)}.
\end{align*}
Equivalently, the posterior odds satisfy
\[
\frac{q(\rvx)}{1-q(\rvx)} = \frac{p_c(\rvx)}{p_w(\rvx)}\cdot \frac{\pi}{1-\pi}.
\]
Since the classifier is (perfectly) calibrated for top-1 correctness such that
$d^{(1)}(\rvx)=P(C\mid X=\rvx)=q(\rvx)$,
\begin{equation}
s_{\mathrm{MSP}}(\rvx)=d^{(1)}(\rvx)=q(\rvx)
= \frac{\frac{\pi}{1-\pi}\frac{p_c(\rvx)}{p_w(\rvx)}}{1+\frac{\pi}{1-\pi}\frac{p_c(\rvx)}{p_w(\rvx)}}.
\end{equation}
The mapping $h(z)=\frac{z}{1+z}$ is strictly increasing for $z\ge 0$ since
$h'(z)=\frac{1}{(1+z)^2}>0$, and scaling by the positive constant $\frac{\pi}{1-\pi}$ preserves
monotonicity. Hence $s_{\mathrm{MSP}}(\rvx)$ is a monotone transformation of the likelihood ratio
$\frac{p_c(\rvx)}{p_w(\rvx)}$, and is Neyman--Pearson optimal by Corollary~\ref{corollary}.

\paragraph{RLog Optimality.}
We can express RLog as the logarithm of the ratio of the top two softmax values:
\[
\frac{d^{(1)}(x)}{d^{(2)}(\rvx)}=\frac{e^{l^{(1)}(\rvx)}}{e^{l^{(2)}(\rvx)}}=e^{l^{(1)}(\rvx)-l^{(2)}(\rvx)}
=e^{s_{\mathrm{RLog}}(\rvx)},
\]
thus $s_{\mathrm{RLog}}(\rvx)=\log\frac{d^{(1)}(\rvx)}{d^{(2)}(\rvx)}$. Observe that
\begin{equation}
\frac{d^{(1)}(\rvx)}{d^{(2)}(\rvx)}
=
\frac{d^{(1)}(\rvx)}{1-d^{(1)}(\rvx)}
\left(1+\frac{L(\rvx)}{d^{(2)}(\rvx)}\right),
\end{equation}
where $L(x)=1-d^{(1)}(\rvx)-d^{(2)}(\rvx)=\sum_{i\ge 3} d^{(i)}(\rvx)\ge 0$. In the binary classification
case, $L(\rvx)=0$ and hence
\[
s_{\mathrm{RLog}}(\rvx)=\log\frac{d^{(1)}(\rvx)}{1-d^{(1)}(\rvx)}
=\log\frac{q(\rvx)}{1-q(\rvx)}
=\log\frac{p_c(\rvx)}{p_w(\rvx)}+\log\frac{\pi}{1-\pi},
\]
which differs from $\log\frac{p_c(\rvx)}{p_w(\rvx)}$ only by an additive constant. Since $\log(\cdot)$ is
strictly increasing and additive constants do not change ordering, $s_{\mathrm{RLog}}$ is
Neyman--Pearson optimal by Corollary~\ref{corollary}.

In the multiclass case, the extra term $\log\!\left(1+\frac{L(\rvx)}{d^{(2)}(\rvx)}\right)$ may vary across samples and can in principle affect ordering. Under the stated assumption $L(x)\ll d^{(2)}(\rvx)$, which is empirically supported by high top-5 classification accuracies in prior works~\citep{liu2021swin, brock2021high}, this
term is small and varies little, so $s_{\mathrm{RLog}}(\rvx)\approx \log\frac{q(\rvx)}{1-q(\rvx)}$ and remains an
approximately monotone proxy for $\log\frac{p_c(\rvx)}{p_w(\rvx)}$, yielding (approximate) Neyman--Pearson optimality.
\end{proof}

\deltamds*
\begin{proof}
The likelihood of a multivariate Gaussian in $\mathbb{R}^d$ is $p(z; \mu, \Sigma) = (2\pi)^{-d/2}\det(\Sigma)^{-1/2} \exp\left(-\frac{1}{2}(x-\mu)^\top \Sigma^{-1}(x-\mu) \right)$. 

We see that the Mahalanobis distance $D(z;\mu, \Sigma)$ is proportional to the log-likelihood of the multivariate Gaussian. As such, assuming that the underlying $p_c$ and $p_w$ follow multivariate Gaussians of the form $\mathcal{N}(\mu_i^c, \Sigma^c)$ and $\mathcal{N}(\mu_i^w, \Sigma^w)$ respectively, 
\begin{align*}
    s_{\Delta\text{-MDS}}(\rvx) &= D_{\text{MDS}}(\rvx;\mu_i^{c}, \Sigma^{c}) - D_{\text{MDS}}(\rvx;\mu_i^w, \Sigma^w) \\
    & = 2\log \frac{p_c(\rvz;\mu_i^c, \Sigma^c)}{p_w(\rvz; \mu_i^w, \Sigma^w)} +  \log \frac{\det\Sigma^c}{\det \Sigma^w}.
\end{align*}
Therefore, $s_{\Delta\text{-MDS}}(\rvx)$ is a monotone transform of $p_c/p_w$ and is Neyman--Pearson optimal by Corollary~\ref{corollary}.

\end{proof}

\deltaknn*
\begin{proof}
The empirical likelihood of the KNN density estimator~\citep{silverman2018density, zhao2022analysis} is given by
\begin{equation}
    \hat{p}_c(\rvz) = \frac{k}{N_c V_d (u_k(\rvz))^d}, \ \ \ \hat{p}_w(\rvz) = \frac{k}{N_w V_d (v_k(\rvz))^d},
\end{equation}
where $k\geq2$, $u_k(\rvz)$ and $v_k(\rvz)$ are the Euclidean distances from $\rvz$ to its $k$-th nearest neighbor from $A_c$ and $A_w$ and $V_d$ is the unit-ball volume in $\mathbb{R}^d$. A classic result of non-parametric nearest neighbor density estimation~\citep{loftsgaarden1965nonparametric} states that as $k \rightarrow \infty$ but $k/N_c \rightarrow 0$, $k/N_w \rightarrow 0$, then $\hat{p}_c(\rvz) \rightarrow p_c(\rvz)$ and $\hat{p}_w(\rvz) \rightarrow p_w(\rvz)$ for every $\rvz$. In other words, the empirical KNN density estimator converges to the true density under the stated asymptotic conditions.

One can see that the difference in log-likelihoods is
\begin{align}\label{eq:knn_log}
     \log \hat{p}_c(\rvz) & - \log \hat{p}_w(\rvz) \nonumber \\ &= -d \log u_k(\rvz) + d\log v_k(\rvz) + \log\frac{N_w}{N_c}.
\end{align}
Therefore
\begin{align}
    s_{\Delta\text{-KNN}}(\rvz) & \triangleq -\log u_k(\rvz) + \log v_k(\rvz) \nonumber \\&= \frac{1}{d}\log\frac{\hat{p}_c(\rvz)}{\hat{p}_w(\rvz)} - \frac{1}{d}\log\frac{N_w}{N_c}.
\end{align}
Since the last term is constant, $s_{\Delta\text{-KNN}}(\rvz)$ is a monotone transform of $\frac{\hat{p}_c(\rvz)}{\hat{p}_w(\rvz)}$. Under the stated conditions on $k, N_c$ and $N_w$, the empirical likelihoods converge to the true likelihoods $p_c$ and  $p_w$, thus $s_{\Delta\text{-KNN}}(\rvz)$ is also Neyman--Pearson optimal.
\end{proof}

\linear*
\begin{proof}
Let 
\begin{equation}
    L_1(\rvx) = \frac{p_c^{(1)}(\rvx)}{p_w^{(1)}(\rvx)}, \ \ \ L_2(\rvx) = \frac{p_c^{(2)}(\rvx)}{p_w^{(2)}(\rvx)}.
\end{equation}
Since each score is already a strictly monotone transform of $L(\rvx)$, we are free to re-express the scores in any other convenient monotone scale without affecting relative ordering and thus Neyman--Pearson optimality. Without loss of generality, we will let $s_i(\rvx) = \log \frac{p_c^{(i)}(\rvx)}{p_w^{(i)}(\rvx)}, i = 1,2$, which are identical to the original scores in terms of sample acceptance and rejection patterns. Given $\lambda \in \mathbb{R}$,
\begin{align}
    t(\rvx) &= s_1(\rvx) + \lambda s_2(\rvx) \\ &= \log L_1(\rvx) + \lambda \log L_2(\rvx) \\
    &= \log (L_1(\rvx) L_2(\rvx)^\lambda) \\
    &= \log \frac{p_{c}^{(1)}(p_{c}^{(2)})
^{\lambda}}{p_w^{(1)} (p_w^{(2)})^{\lambda}}
\end{align}
In other words, $t(\rvx)$ is a monotone transform of the tilted likelihood ratio  $p_{c}^{(1)}(p_{c}^{(2)})
^{\lambda}/p_w^{(1)} (p_w^{(2)})^{\lambda}$.
\end{proof}

Recall $\mathcal{H}_0$ and $\mathcal{H}_1$ represent the hypotheses that the classifier will make a correct and wrong prediction respectively. Assuming that the density of $\mathcal{H}_0$ takes the form of a tilted likelihood $p_{c}^{(1)}(p_{c}^{(2)})
^{\lambda}/Z_c$, where $Z_c$ is a normalizing constant, and vice-versa for $\mathcal{H}_1$, then $t(\rvx)$ is Neyman--Pearson optimal by Corollary~\ref{corollary}.


\section{Average Top $k$ $\Delta$-KNN Modification}\label{app:logmean}
Here we discuss how Neyman--Pearson optimality of the average log-distance formulation of $\Delta$-KNN can hold as described in the main text. Recall that we let $D_{\text{KNN}}(\rvz; A_c) = -\frac{1}{k}\sum_{i=1}^k \log(u_i(\rvz))$ and vice-versa for $A_w$.

For concreteness, let us consider distances to the correct set; the derivation is identical for the wrong set. In the asymptotic limit where $N_c$ is large and $k \ll N_c$, the ball centered at $\rvz$ that just encloses its $k$th nearest neighbor (i.e., with volume $V_d(u_k(\rvz))^d$) is so small that the true density is essentially constant over it, so the radii of the first $k$ neighbors are \emph{conditionally i.i.d. uniform} in the ball. 

As such, let us define the normalized variable
\begin{equation}
    U_i = \left(\frac{u_i(\rvz)}{u_k(\rvz)} \right)^d, \ \ \ i=1,...,k.
\end{equation}
Note that $U_i \in [0,1]$ for all $i$. Since the joint distribution of $U_i$ depends only on $k$, we know from the $i$-th order statistics of $k$ i.i.d. $\mathrm{Uniform}(0,1)$ variables that each $U_i$ is Beta-distributed, 
 $   U_i \sim \mathrm{Beta}(i, k-i+1), \ \ \ 0 \leq U_1 \leq \cdots \leq U_k = 1.$
With some algebra, $\log u_i(\rvz) = \frac{1}{d}\log U_i + \log u_k(\rvz)$. Then,
\begin{equation}\label{eq:avg_log_dist_monotone}
    \frac{1}{k}\sum_{i=1}^k \log(u_i(\rvz)) = \log u_k(\rvz) + \frac{1}{kd}\sum_{i=1}^k \log U_i
\end{equation}
The second term converges almost surely to 
\begin{equation}
    \frac{1}{kd}\sum_{i=1}^k \log U_i \to \frac{1}{d}\int_0^1\log x \ \mathrm{d}x = -\frac{1}{d}
\end{equation}
as $k\to \infty$ as it is a sum of $k$ i.i.d. $\mathrm{Uniform}(0,1)$ random variables. In other words, in the asymptotic limit the average log-distance is a monotone transform of the log-distance itself. By substituting Eq.~\ref{eq:avg_log_dist_monotone} back into Eq.~\ref{eq:knn_log} for distances to both correct and wrong sets, the modified $\Delta$-KNN formulation remains a monotone transform o $p_c/p_w$, thus suggesting Neyman-Pearson optimality under Corollary~\ref{corollary}.

\section{Algorithm Pseudocode for Proposed Scores}\label{app:pseudocode}
Pseudocode for Algorithms \ref{alg:delta_mds} ($\Delta$-MDS and its linear combination) and \ref{alg:delta_knn} (Scoring with $\Delta$-KNN and its linear combination) are shown on the next page.
\clearpage 

\begin{algorithm*}[h]
\caption{Scoring with $\Delta$-MDS and its linear combination}\label{alg:delta_mds}
\begin{algorithmic}[1]
\Require Trained classifier $f$, feature extractor $\phi$ (typically penultimate or final layer of $f$), training set $\mathcal{D}_{\text{train}} = \{(\rvx_i, y_i)\}$, test set $\mathcal{D}_{\text{test}} = \{\rvx_j\}$, optional logit-based score function $s_{\text{logit}}(\rvx)$, combination weight $\lambda$
\Ensure Selector scores $s(\rvx)$ for each $\rvx \in \mathcal{D}_{\text{test}}$
\State Initialize $\mathcal{A}_c^{(i)} \gets \emptyset$ and $\mathcal{A}_w^{(i)} \gets \emptyset$ for $i = 1$ to $K$ \Comment{Class-wise correct and incorrect feature sets}
\For{each $(\rvx, y)$ in $\mathcal{D}_{\text{train}}$}
    \State $\hat{y} \gets f(\rvx)$
    \State $\rvz \gets \phi(\rvx)$
    \If{$\hat{y} = y$}
        \State Add $\rvz$ to $\mathcal{A}_c^{(y)}$
    \Else
        \State Add $\rvz$ to $\mathcal{A}_w^{(y)}$
    \EndIf
\EndFor
\State Compute $\{\mu_i^c, \Sigma^c\}_{i=1}^K$ and $\{\mu_i^w, \Sigma^w\}_{i=1}^K$ from $\mathcal{A}_c$ and $\mathcal{A}_w$
\For{each $\rvx$ in $\mathcal{D}_{\text{test}}$}
    \State $\rvz \gets \phi(\rvx)$
    \State $d_c \gets \max_i -(\rvz - \mu_i^c)^\top (\Sigma^c)^{-1} (\rvz - \mu_i^c)$
    \State $d_w \gets \max_i -(\rvz - \mu_i^w)^\top (\Sigma^w)^{-1} (\rvz - \mu_i^w)$
    \State $s_{\Delta\text{-MDS}}(\rvx) \gets d_c - d_w$
    \If{using linear combination}
        \State $s(\rvx) \gets s_{\Delta\text{-MDS}}(\rvx) + \lambda \cdot s_{\text{logit}}(\rvx)$
    \Else
        \State $s(\rvx) \gets s_{\Delta\text{-MDS}}(\rvx)$
    \EndIf
\EndFor \\
\Return $\{s(\rvx)\}$ for each $\rvx \in \mathcal{D}_{\text{test}}$
\end{algorithmic}
\end{algorithm*}
\begin{algorithm*}[]
\small
\caption{Scoring with $\Delta$-KNN and its linear combination}\label{alg:delta_knn}
\begin{algorithmic}[1]
\Require Trained classifier $f$, feature extractor $\phi$ (typically penultimate or final layer of $f$), training set $\mathcal{D}_{\text{train}} = \{(\rvx_i, y_i)\}$, test set $\mathcal{D}_{\text{test}} = \{\rvx_j\}$, number of neighbors $k$, optional logit-based score function $s_{\text{logit}}(\rvx)$, combination weight $\lambda$
\Ensure Selector scores $s(\rvx)$ for each $\rvx \in \mathcal{D}_{\text{test}}$
\State Initialize $\mathcal{A}_c \gets \emptyset$, $\mathcal{A}_w \gets \emptyset$ \Comment{Global sets of correct and incorrect features}
\For{each $(\rvx, y)$ in $\mathcal{D}_{\text{train}}$}
    \State $\hat{y} \gets f(\rvx)$
    \State $\rvz \gets \phi(\rvx)$
    \If{$\hat{y} = y$}
        \State Add $\rvz$ to $\mathcal{A}_c$
    \Else
        \State Add $\rvz$ to $\mathcal{A}_w$
    \EndIf
\EndFor
\For{each $\rvx$ in $\mathcal{D}_{\text{test}}$}
    \State $\rvz \gets \phi(\rvx)$
    \State Compute $\{u_i\}_{i=1}^k \gets$ distances from $\rvz$ to $k$ nearest neighbors in $\mathcal{A}_c$
    \State Compute $\{v_i\}_{i=1}^k \gets$ distances from $\rvz$ to $k$ nearest neighbors in $\mathcal{A}_w$
    \State $d_c \gets -\frac{1}{k} \sum_{i=1}^k \log(u_i)$
    \State $d_w \gets -\frac{1}{k} \sum_{i=1}^k \log(v_i)$
    \State $s_{\Delta\text{-KNN}}(\rvx) \gets d_c - d_w$
    \If{using linear combination}
        \State $s(\rvx) \gets s_{\Delta\text{-KNN}}(\rvx) + \lambda \cdot s_{\text{logit}}(\rvx)$
    \Else
        \State $s(\rvx) \gets s_{\Delta\text{-KNN}}(\rvx)$
    \EndIf
\EndFor \\
\Return $\{s(\rvx)\}$ for each $\rvx \in \mathcal{D}_{\text{test}}$
\end{algorithmic}
\end{algorithm*}

\clearpage  

\section{Experimental Details}\label{app:exp_details}

\paragraph{Image Experiments. }
This section outlines the models and datasets used in the vision experiments in Section~\ref{sec:exp}. For classifiers, we use the ViT-H/14 variant for DFN CLIP and the ``Giant'' variant of EVA, both with patch size 14. Pretrained weights are obtained from the \texttt{OpenCLIP}\footnote{\url{https://github.com/mlfoundations/open_clip}}~\citep{cherti2023reproducible} and \texttt{timm}\footnote{\url{https://github.com/huggingface/pytorch-image-models}} libraries, respectively.

ImageNet and its covariate-shifted variants are downloaded from their respective open-source repositories\footnote{\url{https://www.image-net.org/}}\footnote{\url{https://github.com/hendrycks/imagenet-r}}\footnote{\url{https://github.com/hendrycks/natural-adv-examples}}\footnote{\url{https://objectnet.dev/}}\footnote{\url{https://imagenetv2.org/}}\footnote{\url{https://github.com/HaohanWang/ImageNet-Sketch}}\footnote{\url{https://github.com/hendrycks/robustness}}. For ImageNetV2, we use the MatchedFrequency test set to match the frequency distribution of the original ImageNet. For ImageNet-C, we evaluate using corruption level 5 to simulate the most challenging conditions. The dataset includes four corruption categories: 1) blur, 2) digital, 3) noise, and 4) weather. Each category contains multiple corruption types (e.g., Gaussian, impulse, and shot noise under the noise category). To ensure balanced evaluation, we first average results within each category, then average across the four categories to give equal weight to each corruption type.

For logit-based scores with CLIP, we are required to construct logits over class concepts by taking the dot product between the text embedding $T_\theta(\rvc)$ of  class concepts $\rvc$ and the image embedding, $\phi(\rvx)$, where $T_\theta$ is the text encoder of CLIP. Given a class label $y$, we construct the class concept with the template ``a real, high-quality, clear and clean photo of a \{$y$\}'' when computing the confidence scores for logit-based selectors. We found this to improve scores slightly as opposed to using the default template in the original work~\citep{radford2021learning}. We attribute this to the hypothesis that CLIP should produce lower confidence scores when faced with covariate-shifted inputs, such as sketches or corrupted images, as the error rate on covariate shifts are much higher~\citep{heng2025detecting} than on Im-1K, which are generally clear and well-lit photographs.

Distance-based methods such as MDS, KNN, and our proposed variants compute distances in the feature space. For CLIP, we use the output of the final layer of the vision encoder; for EVA, we use the penultimate layer output.

The hyperparameters $\lambda$ (for linear combinations) and $k$ (for KNN-based scores) used in the experiments reported in Table~\ref{tab:clip_sc} and Table~\ref{tab:eva_sc} are summarized in Table~\ref{tab:lambda_vals}. All experiments are conducted on a single NVIDIA A6000 GPU with 48GB of memory.

\begin{table}[]
\centering
\caption{Values of $\lambda$ and $k$ for results reported in Sec.~\ref{sec:exp}.}
\label{tab:lambda_vals}
\resizebox{0.25\textwidth}{!}{%
\begin{tabular}{@{}ccc@{}}
\toprule
Method             & $\lambda$    & $k$    \\ \midrule
\multicolumn{3}{c}{\cellcolor[HTML]{EFEFEF}DFN CLIP}   \\ \midrule
KNN                & -                          & 50   \\
$\Delta$-KNN          & -                          & 25   \\
$\Delta$-MDS-RLog     & 10000                      & -    \\
$\Delta$-KNN-RLog     & 10                         & 25   \\ \midrule
\multicolumn{3}{c}{\cellcolor[HTML]{EFEFEF}Eva}        \\ \midrule
KNN                &  -                          & 50   \\
$\Delta$-KNN          &  -                          & 25   \\
$\Delta$-MDS-RLog     & 1000                       & -    \\
$\Delta$-KNN-RLog     & 0.5                        & 25   \\ \midrule
\multicolumn{3}{c}{\cellcolor[HTML]{EFEFEF}DistilBERT} \\ \midrule
KNN                &      -                      & 50   \\
$\Delta$-KNN          &     -                       & 25   \\
$\Delta$-MDS-RLog     & 1000                       & -    \\
$\Delta$-KNN-RLog     & 0.05                       & 25   \\
$\Delta$-MDS-MSP      & 1000                       & -    \\
$\Delta$-KNN-MSP      & 0.5                        & 25   \\ \bottomrule
\end{tabular}%
}
\end{table}

\paragraph{Language Experiments. }
For language experiments, we fine-tune a DistilBERT model on the Amazon Reviews dataset using the training pipeline provided in the official LISA repository\footnote{\url{https://github.com/huaxiuyao/LISA}}, with default hyperparameters. For distance-based selectors, we extract features from the penultimate layer of DistilBERT, consistent with the EVA setting. The values of $\lambda$ and $k$ used in Table~\ref{tab:amazon} are also listed in Table~\ref{tab:lambda_vals}. All language experiments are run on a single NVIDIA A6000 48GB GPU.

\section{Additional Results}
The following provides additional experimental results showing risk-coverage trade-offs, hyperparameter sensitivity, and performance results on both semantic and covariate shift.

\begin{figure}[h]
    \centering
    \includegraphics[width=\linewidth]{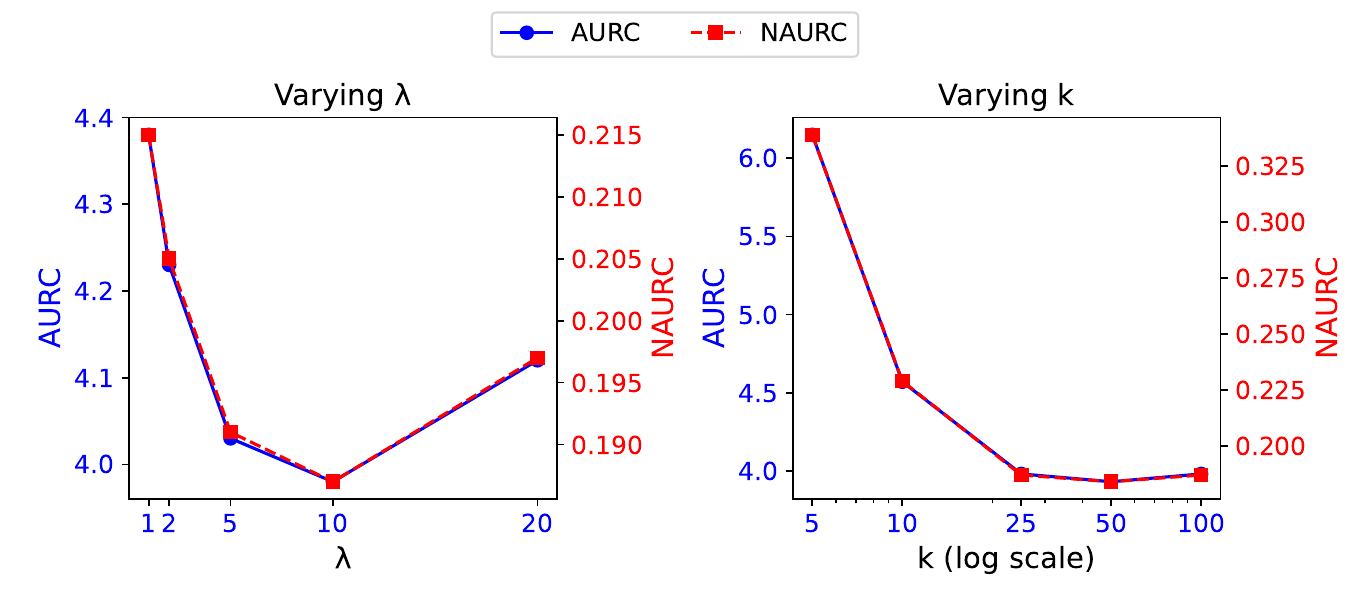}
    \caption{Hyperparameter sensitivity plots for $\lambda$ and $k$ for $\Delta$-KNN-RLog with DFN on ImageNet1K. The results for this combination are not highly sensitive to $\lambda$, while for $k$ results plateau at around $k=25$.}
    \label{fig:hyperparam_sensitivity}
\end{figure}

\begin{figure*}
    \centering
    \includegraphics[trim=6 6 20 6, clip, width=\linewidth]{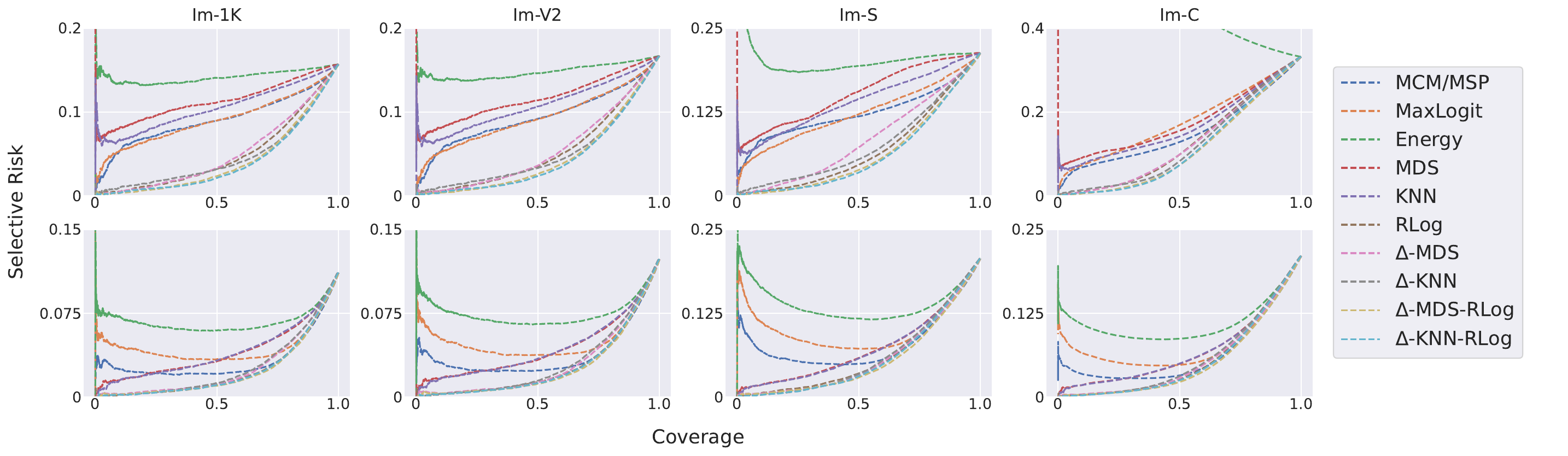}
    \caption{Risk-coverage curves of various selector methods for CLIP (top row) and EVA (bottom row). Our proposed methods consistently achieve the best risk-coverage tradeoff and remain stable at low coverage levels.}
    \label{fig:rc_curves}
\end{figure*}

\begin{table*}[h]
  \centering
  \setlength{\tabcolsep}{2.5pt}
  \renewcommand{\arraystretch}{1.1}
  \small
  \caption{Additional results under covariate shift using ResNet50 trained on ImageNet1K from the official PyTorch repository. Consistent with the findings on larger models in the main text, our proposed methods outperform all baselines. These results affirm that the gains stem from algorithmic improvements in selective classification, rather than from large-scale pretraining potentially mitigating distribution shifts.}
  \label{tab:resnet}
\begin{tabular}{@{}ccccccccccc@{}}
\toprule
 &
  \multicolumn{2}{c}{\textbf{Im-1K}} &
  \multicolumn{2}{c}{\textbf{Im-V2}} &
  \multicolumn{2}{c}{\textbf{Im-S}} &
  \multicolumn{2}{c}{\textbf{Im-C}} &
  \multicolumn{2}{c}{\textbf{Avg (1K)}} \\ 
\cmidrule(lr){2-3}
\cmidrule(lr){4-5}
\cmidrule(lr){6-7}
\cmidrule(lr){8-9}
\cmidrule(lr){10-11}
 Method         & A    & N     & A    & N     & A    & N     & A    & N     & A    & N       \\ \midrule
MSP       & 9.67 & 0.248 & 10.32 & 0.257 & 23.4 & 0.196 & 22.6 & 0.203 & 16.5 & 0.226 \\
MaxLogit  & 12.7 & 0.379 & 13.5  & 0.394 & 25.3 & 0.248 & 24.5 & 0.260 & 19.0 & 0.320 \\
Energy    & 16.3 & 0.539 & 17.6  & 0.567 & 28.1 & 0.328 & 27.8 & 0.358 & 22.5 & 0.448 \\
MDS       & 21.2 & 0.752 & 21.6  & 0.736 & 58.4 & 1.19  & 58.3 & 1.23  & 39.9 & 0.977 \\
KNN       & 28.8 & 1.08  & 30.4  & 1.11  & 40.2 & 0.673 & 46.2 & 0.900 & 36.4 & 0.940 \\
RLog      & 8.61 & 0.201 & 9.14  & 0.207 & 24.5 & 0.225 & 22.4 & 0.193 & 16.2 & 0.207 \\
SIRC      & 18.9 & 0.648 & 19.6  & 0.652 & 46.2 & 0.844 & 40.5 & 0.729 & 31.3 & 0.718 \\
$\Delta$-MDS & 13.0 & 0.392 & 13.7  & 0.399 & 30.3 & 0.390 & 27.3 & 0.338 & 21.1 & 0.380 \\
$\Delta$-KNN & 15.3 & 0.493 & 16.2  & 0.509 & 36.0 & 0.552 & 33.9 & 0.529 & 25.3 & 0.521 \\
$\Delta$-MDS-RLog &
  \textbf{8.33} &
  \textbf{0.189} &
  \textbf{8.86} &
  \textbf{0.195} &
  \textbf{24.1} &
  \textbf{0.215} &
  \textbf{21.9} &
  \textbf{0.180} &
  \textbf{15.8} &
  \textbf{0.195} \\
$\Delta$-KNN-RLog &
  \underline{8.48} &
  \underline{0.195} &
  \underline{9.02} &
  \underline{0.202} &
  \underline{24.3} &
  \underline{0.221} &
  \underline{22.2} &
  \underline{0.189} &
  \underline{16.0} &
  \underline{0.202} \\ \bottomrule
\end{tabular}
\end{table*}

\begin{table*}[]
\centering
\setlength{\tabcolsep}{5pt}
\renewcommand{\arraystretch}{1.1}
\small
\caption{Additional results on semantic shift datasets, namely ImageNet-O, iNaturalist, SUN and Places. The ID distribution is ImageNet-1K.}
\label{tab:semantic_shift}
\begin{tabular}{@{}ccccccccccc@{}}
\toprule
 &
  \multicolumn{2}{c}{ImageNet-O} &
  \multicolumn{2}{c}{iNaturalist} &
  \multicolumn{2}{c}{SUN} &
  \multicolumn{2}{c}{Places} &
  \multicolumn{2}{c}{Avg} \\ \midrule
               & A    & N     & A    & N     & A    & N     & A    & N     & A    & N     \\ \midrule
\multicolumn{11}{c}{\cellcolor[HTML]{EFEFEF}DFN CLIP}                                     \\ \midrule
MSP            & 9.70 & 0.459 & 11.7 & 0.273 & 12.2 & 0.293 & 12.8 & 0.319 & 11.6 & 0.336 \\
MDS            & 11.8 & 0.584 & 14.1 & 0.370 & 14.8 & 0.398 & 15.2 & 0.414 & 14.0 & 0.442 \\
MaxLogit       & 9.75 & 0.462 & 11.9 & 0.282 & 12.6 & 0.311 & 13.4 & 0.343 & 11.9 & 0.350 \\
Energy         & 17.7 & 0.930 & 28.1 & 0.935 & 33.2 & 1.14  & 32.9 & 1.13  & 28.0 & 1.03  \\
KNN            & 11.3 & 0.552 & 13.2 & 0.334 & 13.9 & 0.360 & 13.9 & 0.362 & 13.1 & 0.402 \\
RLog           & 6.21 & 0.253 & 10.0 & 0.205 & 11.2 & 0.254 & 11.5 & 0.267 & 9.73 & 0.245 \\
SIRC           & 18.2 & 0.958 & 32.7 & 1.12  & 30.2 & 1.02  & 32.3 & 1.11  & 28.4 & 1.05  \\
$\Delta$-KNN      & 5.93 & 0.237 & 9.42 & 0.181 & 10.6 & 0.228 & 10.4 & 0.220 & 9.09 & 0.217 \\
$\Delta$-KNN-RLog &
  \textbf{5.20} &
  \textbf{0.194} &
  \textbf{8.60} &
  \textbf{0.148} &
  \textbf{9.69} &
  \textbf{0.192} &
  \textbf{9.76} &
  \textbf{0.195} &
  \textbf{8.31} &
  \textbf{0.182} \\ \midrule
\multicolumn{11}{c}{\cellcolor[HTML]{EFEFEF}EVA}                                          \\ \midrule
MSP            & 4.94 & 0.285 & 8.46 & 0.214 & 9.29 & 0.251 & 10.1 & 0.289 & 8.20 & 0.260 \\
MDS            & 4.58 & 0.258 & 6.64 & 0.133 & 7.41 & 0.167 & 7.78 & 0.184 & 6.60 & 0.186 \\
MaxLogit       & 6.37 & 0.392 & 10.8 & 0.319 & 11.5 & 0.349 & 12.6 & 0.399 & 10.3 & 0.365 \\
Energy         & 8.80 & 0.572 & 14.5 & 0.487 & 14.5 & 0.484 & 15.8 & 0.545 & 13.4 & 0.522 \\
KNN            & 4.63 & 0.262 & 6.70 & 0.135 & 7.43 & 0.168 & 7.70 & 0.180 & 6.62 & 0.186 \\
RLog           & 3.82 & 0.201 & 6.92 & 0.145 & 8.25 & 0.205 & 8.21 & 0.203 & 6.80 & 0.189 \\
SIRC & 5.41 & 0.320 & 9.14 & 0.245 & 10.2 & 0.293 & 11.1 & 0.331&  8.96&  0.297\\
$\Delta$-MDS &
  3.49&
  \textbf{0.177} &
  \textbf{5.63} &
  \textbf{0.087} &
  \textbf{6.72} &
  \textbf{0.136} &
  \textbf{6.95} &
  \textbf{0.146} &
  \textbf{5.70} &
  \textbf{0.137} \\
$\Delta$-MDS-RLog & \textbf{3.37} & 0.202 & 5.64 & 0.130 & 6.90 & 0.187 & 7.08 & 0.186 & 5.75 & 0.176 \\ \bottomrule
\end{tabular}%
\end{table*}

\clearpage

\end{document}